\documentclass[conference]{IEEEtran}
\IEEEoverridecommandlockouts

\usepackage{cite}
\usepackage{amsmath,amssymb,amsfonts}
\usepackage{algorithmic}
\usepackage{graphicx}
\usepackage{textcomp}
\usepackage{xcolor}
\usepackage{multicol}
\usepackage{subfig}
\usepackage{booktabs}
\usepackage[draft]{hyperref}
\usepackage{float}
\usepackage[ruled, vlined, noend, boxed]{algorithm2e}
\usepackage{amsthm}
\usepackage{tcolorbox}
\usepackage{multirow}
\usepackage{makecell}

\newtheorem{theorem}{Theorem}
\newtheorem{lemma}{Lemma}[theorem]
\newtheorem{claim}{Claim}[theorem]

\newtheorem{insight}{Finding}

\begin{document}

\title{\textsc{Director}: Accelerating Distributed MoE Serving via Online Proactive Expert Placement}

\author{

\IEEEauthorblockN{
Qianli Liu$^{1}$, 
Kaibin Guo$^{2}$,
Zicong Hong$^{1}$, 
Peng Li$^{3}$, 
Fahao Chen$^{4}$, 
Haodong Wang$^{1}$,
Jian Lin$^{1}$,
and 
Song Guo$^{1}$}
\IEEEauthorblockA{$^1$Department of Computer Science and Engineering, The Hong Kong University of Science and Technology, Hong Kong \\$^2$School of Software Engineering, Sun Yat-Sen University, China\\$^3$School of Cyber Science and Engineering, Xi’an Jiaotong University, China\\$^4$School of Artificial Intelligence, Shandong University, China
\\qianli.liu@connect.ust.hk, guokb@mail2.sysu.edu.cn, ziconghong@gmail.com, pengli@xjtu.edu.cn, \\chenfh@ieee.org, hwanghb@connect.ust.hk, jlindc@connect.ust.hk, songguo@cse.ust.hk}

\thanks{
This research was supported by fundings from the Hong Kong RGC General Research Fund (152169/22E, 152228/23E, 162161/24E, 162116/25E), Research Impact Fund (No. R5060-19, No. R5011-23), Collaborative Research Fund (No. C1042-23GF), NSFC/RGC Collaborative Research Scheme (Grant No. 62461160332 \& CRS\_HKUST602/24), Areas of Excellence Scheme (AoE/E-601/22-R), National Natural Science Foundation of China (No. 62471383), and the InnoHK (HKGAI). Corresponding authors: Zicong Hong, Song Guo. 
}

}

\maketitle

\begin{abstract}
Expert parallelism has become the prevailing paradigm to serve Mixture-of-Experts (MoE) models. 
Its efficiency depends on the communication and computation latencies of the GPUs, which are linked to the placement of experts in the GPUs. 
Existing works for optimizing expert placement focus on leveraging past requests' expert activation patterns. 
However, they demonstrate deficiencies facing diverse and rapidly changing request patterns, calling for an online, proactive approach. 
Implementing such  an approach requires addressing several challenges: the uncertainty associated with incoming requests' expert activation, the cost of expert migration, and the NP-hard complexity in optimization.
Therefore, we present \textsc{Director}, a new distributed MoE serving system that minimizes end‑to‑end latency via prediction‑driven, online expert placement. 
\textsc{Director} uses either a lightweight cascaded predictor or a low‑bit quantized replica for expert activation patterns of incoming requests. 
An online migration module then enacts the changes with near‑zero downtime by executing migrations in compute‑bound phases, keeping disruption bounded. 
At its core, a relaxation‑based expert placement optimizer operates under capacity constraints, runs in polynomial time, and achieves a $(1+\epsilon)$ approximation ratio.
Finally, we implement a prototype and demonstrate, through extensive experiments, a reduction in end-to-end latency of $11\sim55\%$ for popular MoE models (e.g., Mistral, DeepSeek and Qwen) compared to existing work.
\end{abstract}





\section{Introduction}
\label{sec:introduction}

Mixture-of-Experts (MoE) models~\cite{llama4_meta_2025,qwen1.5,qwen2,qwen3technicalreport,deepseek_moe,deepseek_v2,deepseek_r1} have become a popular architecture for scaling large language models (LLMs) to trillions of parameters while keeping the computational cost proportional.
Unlike dense models, which use all parameters, MoE models dynamically route input tokens to a subset of feed-forward networks (FFNs), known as \emph{experts}.
This selective activation allows the required FLOPs to scale sub-linearly as the model size increases~\cite{shazeer2017outrageously,fedus2022switch,zhou2024training}.

The increasing parameter count of MoE models requires expert parallelism for distributed MoE serving due to the significant memory footprint~\cite{lepikhin2020gshard}. In this paradigm, the experts are distributed across a cluster of GPUs, while non-expert parameters, such as self-attention blocks, are replicated on every GPU. A forward pass through each MoE layer requires a pair of all-to-all communication steps: one to dispatch tokens to their selected experts, and another to aggregate the results. This paradigm effectively utilises the sparse, conditional computation of MoE, establishing it as a widely adopted standard for serving large-scale MoE models.

Despite the advantages of expert parallelism, two performance bottlenecks have been identified: high latency during frequent all-to-all communication and an imbalanced computational load across GPUs.
The communication bottleneck arises because the hidden state of each token must be dispatched to the selected experts during all-to-all operations for each MoE layer. These experts are often dispersed across the GPUs, and the hidden state is large. 
Furthermore, the imbalanced load is caused by a shift in distribution between the inference workloads and the model's training data.
This non-uniform selection results in stragglers. The GPUs hosting these over-selected experts determine the overall latency, as the aggregation cannot proceed until the last expert has finalised its computation.

\begin{figure}[t]
	\centering
        \includegraphics[width=0.75\linewidth]{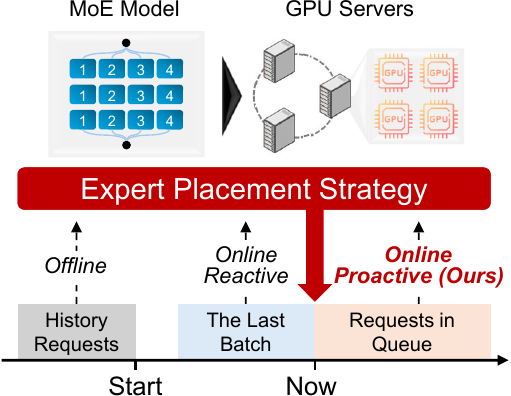}
	\caption{A comparison of offline expert placement, online reactive expert placement, and online proactive expert placement.
    }
    \label{fig:intro}
    \vspace{-0.5cm}
\end{figure}

To accelerate the distributed MoE serving, as shown in \autoref{fig:intro}, recent works can be classified into two strategies: \emph{offline expert placement} and \emph{online reactive expert placement}.
The offline works compute a static expert placement before serving begins. By analyzing historical routing patterns and inter-layer expert affinities, they co-locate frequently co-activated experts to minimize communication and duplicate frequent experts to balance the load~\cite{lepikhin2020gshard,eplb,go2025moetuneroptimizedmixtureexpert}. 
However, these works rely on pre-profiled data, which renders their static placement suboptimal when the inference workload exhibits a shift in data distribution, resulting in performance degradation.
In contrast, the online works adjust to the workload. They respond to dynamic workload by replicating overloaded experts~\cite{he2022fastermoe,flexmoe,hwang2023tutel} or choosing whether to move experts or requests~\cite{zhai2023smartmoe,liu2023janus}. 
However, they often adopt an online reactive design which optimizes for the most recent batch, so decisions lag behind the workload; under rapidly shifting inputs, this lag yields persistent misalignment and poor performance (\autoref{sec:motivation}).

Therefore, we propose a new paradigm: \emph{online proactive expert placement} for distributed MoE serving. This would analyze the requests in the queue and optimize the placement of experts on GPUs before processing them.
This anticipatory placement would co-locate experts likely to be activated together on the same GPU for the requests to be processed, thereby reducing the need for expensive inter-GPU communication. Furthermore, it ensures that the computational workload is distributed evenly across all GPUs, thereby maintaining system-wide load balancing and preventing bottlenecks caused by overloaded GPUs.

However, realizing this paradigm faces three key challenges: 
\textbf{1) Routing Path Uncertainty.}
In MoE, the gate at layer $l$ is evaluated on the representation produced by the experts chosen at layer $l\!-\!1$, routing is data‑dependent and sequentially coupled across layers. 
Hence, a token’s layerwise routing cannot be determined a priori; it is revealed only during the forward pass and must be estimated by prediction.
\textbf{2) Cost of Adaptivity.} 
Migrating experts at run time competes directly with computation: while an expert’s parameters are in transit, that expert cannot do computations, and the data transfer itself introduces communication overhead.  
An effective system must therefore balance migration and computation, scheduling transfers to limit the downtime and ensuring that any reconfiguration delivers a performance gain.
\textbf{3) Complexity of Expert Placement.}
The search space for an optimal expert placement is immense and relying on brute-force search or exact solvers to find the optimal placement is infeasible for real-time decision-making. 
For example, DeepSeekMoE 16B contains 27 MoE layers, each with 64 experts. When deployed across 4 GPUs in expert parallelism. 
In each layer, the number of ways to assign 16 experts to the first GPU is given by the combination $C_{16}^{64}$. 
Given that the model has 27 layers that can be configured independently, the total size of the placement search space approaches $(C_{16}^{64})^{27}$. 
Formally, the task of finding an optimal placement is an NP-hard problem~\cite{savelsbergh1997branch}.

To address these challenges, this paper introduces \textsc{Director}, a distributed MoE serving system that performs prediction-driven, online expert placement reconfiguration. 
\textsc{Director} uses either a cascaded predictor or a quantized MoE replica to accurately predict routing, thereby enabling the proactive updating of expert placements. 
A computation-overlapped migration module then enacts changes with near-zero downtime by executing transfers only in compute-bound phases, keeping disruption bounded.
We summarize our contributions as follows.

\begin{itemize}
    \item We design an online mechanism that couples routing prediction with execution and performs computation‑overlapped live migration with near‑zero downtime, enabling the system to anticipate routing and adjust placements with bounded impact.
    \item We develop a polynomial-time scheduling algorithm that finds an expert placement with a provable performance bound. The algorithm efficiently balances communication costs and GPU load to reduce overall latency.
    \item We implement and evaluate a prototype of \textsc{Director}. Through extensive experiments on diverse workloads and hardware configurations, we demonstrate a significant end-to-end latency reduction of $11\sim55\%$ compared to existing MoE serving systems.
\end{itemize}

\section{Background \& Related Work}
\label{sec:background}
\subsection{Mixture of Experts}

MoE architecture replaces the standard FFN with an MoE layer in the transformer~\cite{shazeer2017_MoE}. 
Each MoE layer consists of a set of FFN sub-networks known as \emph{experts} and a learnable \emph{gating network} that routes input tokens to them.
For each incoming token, the gating network computes relevance scores and selects a small subset of experts for processing, commonly top-k routing. 
After the selected experts have processed the token, the outputs of these experts are aggregated, typically through a weighted summation based on the scores assigned by the gating network.
This sparse activation increases the model's total parameter count without proportionally increasing the computation requirement for a single forward pass.

\subsection{Expert Parallelism}

\label{expert_parallelism}

In expert parallelism, the experts are distributed to different GPUs, with parameters such as the self-attention block, layer normalization and gating network being replicated on each GPU~\cite{lepikhin2020gshard,shazeer2017outrageously}. 
During a forward pass, each GPU first applies these replicated components to its local tokens. The gate then selects the experts for each token, after which the tokens are packaged and sent to the GPUs hosting their targeted experts via all-to-all communication. Once the expert computations have finished, a second all-to-all communication process returns the token outputs to their original GPUs. This enables the model to proceed to the next layer.

\subsection{Optimization for Expert Parallelism}

Various optimisations have been introduced to improve the efficiency of distributed MoE training and inference.

\paragraph{Model Architecture \& Routing Rule}
Some works modify the model's structure to increase the likelihood of a token being processed locally. For instance, expert pruning decreases the number of experts, thereby reducing inter-GPU communication~\cite{lu-etal-2024-experts, liu2024efficient}. Other works optimize the gate's selection in the forward pass. DeepSeek~\cite{deepseek_v2,deepseek_r1}, FasterMoE~\cite{he2022fastermoe} and TA-MoE~\cite{tamoe} enforce a constraint that limits the number of experts that can be selected from other GPUs. 
As our work focuses on optimizing the placement of experts at a system level, these algorithm-level optimizations are orthogonal.

\paragraph{Communication-Computation Pipeline}
Some works recognize that standard execution results in a sequential bottleneck, whereby GPU-intensive computations are forced to wait for network-intensive data transfers. To eliminate this dependency, the input tensor is partitioned into smaller chunks. This creates a pipelined workflow that allows the expert computation on one chunk to run concurrently with the all-to-all communication of another~\cite{shi2024schemoe,fsmoe,hwang2023tutel,shi2023pipemoe, Klotski,pan2024parm}. They optimize the pipeline using a uniform, random expert placement strategy, which is complementary to our work.

\paragraph{Expert Placement}

The final family optimises the placement of experts on GPUs. These can be categorised as either determining an optimal placement offline or adapting it at runtime. The former leverages offline profiling to pre-compute a fixed expert layout~\cite{go2025moetuneroptimizedmixtureexpert, he2024expertflow}. By analysing historical routing data or cross-layer expert affinities, they create a globally optimised, yet static, placement. 
The latter reacts to observed traffic by moving or creating replicas of popular experts on underutilised GPUs~\cite{zhai2023smartmoe,he2022fastermoe, flexmoe}. Another work combines the expert-centric approach (moving data) with a data-centric alternative (moving experts)~\cite{liu2023janus}. 
A more recent paradigm dynamically rearranges the placement of sequences between devices~\cite{liunetmoe, chen2024luffy}.
However, all these online works are reactive, optimizing placements based on past patterns. This inherent lag will lead to suboptimal performance for dynamic workloads, leaving a critical gap for our proactive design that can anticipate and adapt to incoming demands.

\section{Motivation}
\label{sec:motivation}

This section analyses the characteristics of expert parallelism and identifies the bottleneck of existing expert placement approaches, revealing an opportunity for optimization.

\begin{figure}[t]
    \centering
    \includegraphics[width=0.8\linewidth]{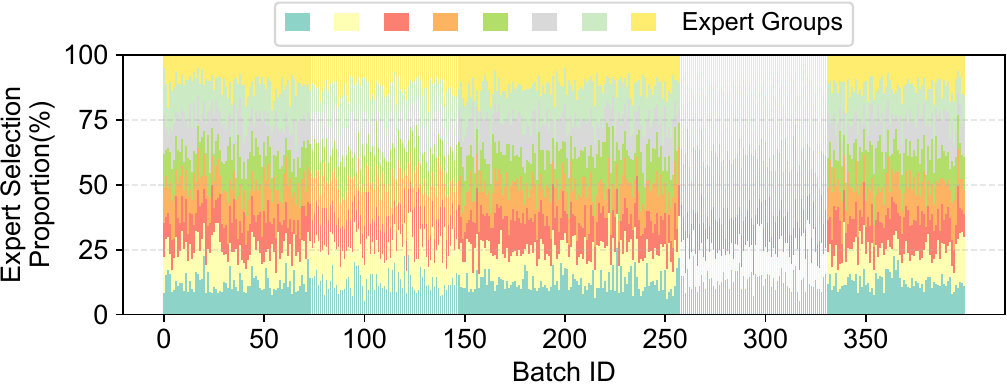}
    \vspace{-0.1cm}
    \caption{Expert selection frequency in a single layer (layer 20 as an example) of DeepSeekMoE 16B with a batch size of $32$.}
    \label{fig:finding2}
    \vspace{-0.5cm}
\end{figure}

\begin{tcolorbox}
\begin{insight}
The activation patterns of experts within an MoE model exhibit significant dynamism when processing different request batches over time.
\end{insight}
\end{tcolorbox}

Both the offline and online reactive approaches operate on the assumption that the patterns of expert activation are relatively stable during the serving process~\cite{zhai2023smartmoe,he2024expertflow, go2025moetuneroptimizedmixtureexpert, he2022fastermoe,flexmoe,zhang2025popfetcher}.
However, this may not be valid in realistic, multi-user serving environments.
To demonstrate this, we randomly select requests from text generation~\cite{wikitext}, mathematical reasoning~\cite {math_dataset}, code generation~\cite{jain2024livecodebench} and record their expert activation patterns.
As shown in \autoref{fig:finding2}, the selection frequency of any given expert varies significantly.
This instability calls into question the effectiveness of both offline and online reactive approaches. This calls for a new \emph{online proactive} approach that optimizes expert placement based on the expert activation patterns of incoming requests rather than the completed ones.

\begin{tcolorbox}
\begin{insight}
When it comes to incoming requests, it is more effective to optimise expert placement based on their activation patterns than on past requests.
\end{insight}
\end{tcolorbox}

\begin{figure}
    \centering
    \includegraphics[width=0.8\linewidth]{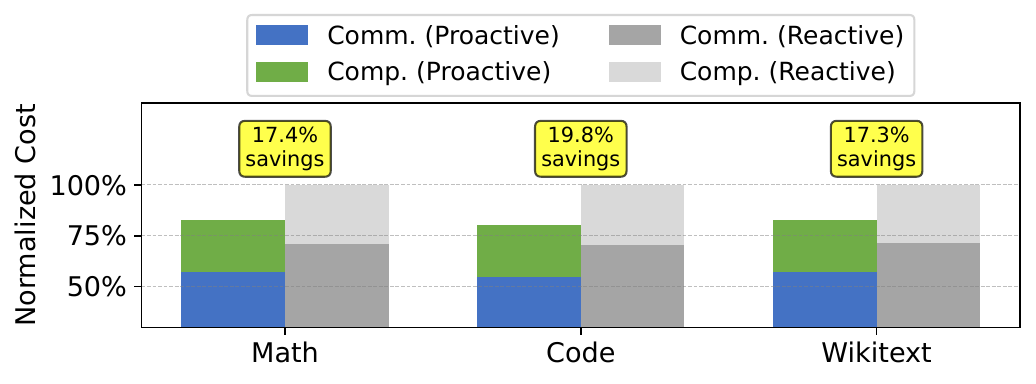}
    \vspace{-0.1cm}
    \caption{The maximum reduction in latency achieved by the design of online proactive expert placement.
    }
    \label{fig:finding3}
    \vspace{-0.5cm}
\end{figure}

Although an online proactive approach does not yet exist, we evaluate its potential benefits below. 
We choose a small-scale MoE model Mixtral 8$\times$7B and compare the optimal latency of reactive and proactive strategies through exhaustive searching.
In the reactive strategy, the optimal placement determined from the previous batch of requests is applied to the next one.
In the proactive strategy, the optimal placement is determined from the incoming batch itself. We limit the number of copies of each expert to one here, since exhaustive searching would be impractical if expert replication were considered.
We adopt the Cluster A configuration in \autoref{sec:evaluation}.
As shown in \autoref{fig:finding3}, the optimal end-to-end latency can be reduced by up to $20\%$.
Furthermore, we believe that the reduction will be amplified when considering expert replication.
This shows the significant potential of online proactive design.

\begin{tcolorbox}
\begin{insight}
Existing predictors of expert activation patterns with negligible overhead perform poorly in the state-of-the-art fine-grained MoE models.
\end{insight}
\end{tcolorbox}

\begin{figure}[t]
	\centering
	\includegraphics[width=0.85\linewidth]{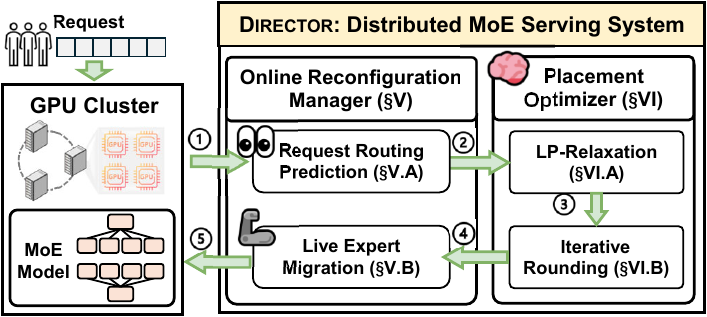} 
	\caption{The workflow of \textsc{Director}.}
	\label{fig:system_overview}
\end{figure}

\begin{table}[t]
\centering
\caption{Performance of the predictor across different MoE models. The overhead is measured as the ratio of prediction time to the per-token end-to-end generation latency.}
\label{tab:finding3}
\begin{tabular}{lcccc}
\toprule
\makecell{Model} &\makecell{Expert Num.\\(per Layer)} & \makecell{Top-$k$\\Routing} & \makecell{Prediction\\Accuracy} & \makecell{Relative\\Overhead} \\
\midrule
Mixtral-8$\times$7B &8      &2      & $61.45\%$     & $1.70\%$  \\
DeepSeekMoE-16B     &64     &6      & $56.14\% $    & $2.71\%$  \\
DeepSeek-V2-Lite    &64     &6      & $55.17\% $    & $2.58\%$  \\
Qwen3-30B-A3B       &128    &8      & $46.69\% $    & $1.59\%$  \\
\bottomrule
\end{tabular}
\vspace{-0.3cm}
\end{table}

Implementing such an online proactive design needs to identify the expert routing pattern of incoming workload. One possible solution would be to use a small neural network for prediction.
However, most existing predictors~\cite{du2024sida,he2024expertflow} are only designed for traditional MoE models, e.g. Switch Transformers~\cite{fedus2022switch}, that select a top-1/2 expert(s) per token.
This problem is exacerbated by the architectural trend toward fine-grained MoE: state-of-the-art models are rapidly scaling their number of experts and top-k routing values.
To evaluate the feasibility, we train a single-layer Transformer with limited overhead budget on the gate network's logits from historical requests—and evaluating it on several state-of-the-art MoE models.
As shown in \autoref{tab:finding3}, although the overhead is negligible, the predictor achieves up to 60\% accuracy. 
Thus, we need to design a new predictor for the online proactive design in the state-of-the-art fine-grained MoE models.

\section{System Overview}
\label{sec:overview}

Motivated by the findings in \autoref{sec:motivation}, we propose \textsc{Director}, a new distributed MoE serving system with three design goals:
\begin{itemize}
    \item \textbf{Proactive Placement}: Rather than optimising for past expert activation patterns, the system should optimise expert placements based on the incoming workloads.
    \item \textbf{Live Reconfiguration}: The reconfiguration of expert placement should result in minimal downtime and a limited impact on performance.
    \item \textbf{Performance Guarantee}: 
    The placement algorithm should be designed to provide a guaranteed performance solution in a limited amount of time.
\end{itemize}

The system overview is shown in \autoref{fig:system_overview}. 
It employs two key components: an online reconfiguration manager (\autoref{design1}) and a relaxation-based placement optimizer (\autoref{design2}). 
The former is responsible for perceiving expert routing of incoming workloads and executing expert reconfiguration.
The latter is responsible for identifying an optimised expert placement strategy based on predicted expert routing.

The system maintains a queue of pending requests and the online reconfiguration manager uses this to predict the workload. Based on this prediction, the placement optimizer computes an updated placement to minimize the expected latency. Once the updated placement has been obtained, the live migration mechanism schedules the required migrations to minimize downtime and limit the impact on performance.

\section{Online Expert Placement Reconfiguration Mechanism}
\label{design1}

As mentioned in \autoref{sec:introduction} and \autoref{sec:motivation}, to maintain an effective placement, the first two challenges are: how to predict the expert routing for pending requests, and how to make expert migrations without interrupting the normal serving process.
Therefore, we introduce an online expert placement reconfiguration mechanism. As illustrated in \autoref{fig:system_overview}, the mechanism operates as follows: first, it employs an adaptive predictor to forecast the expert activation patterns for pending requests in the queue at a low cost. Based on this prediction, it uses the algorithm from \autoref{design2} to get an optimal placement. Finally, it migrates experts during idle periods in GPU communication.

\begin{figure}[t]
	\centering
	\includegraphics[width=0.83\linewidth]{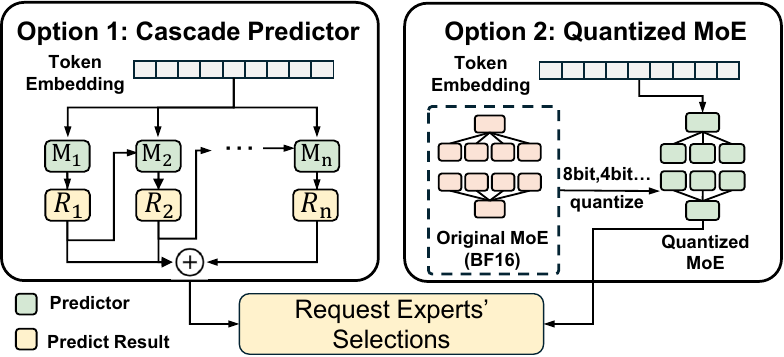} 
	\caption{Two options of predictors in \textsc{Director}}
	\label{fig:predict_mode}
    \vspace{-0.4cm}
\end{figure}

\subsection{Adaptive Routing Predictor}
\label{predictor}

To accommodate various application scenarios, we provide two options for predicting expert activation patterns, as shown in \autoref{fig:predict_mode}: \emph{cascade predictor} and \emph{quantized predictor}.

\paragraph{Cascade Predictor.}
We observe a limitation in the naive predictor in \autoref{sec:motivation} that maps token embeddings to top-k expert choices: its predictions for deeper MoE layers become less accurate. 
This is because the routing in the deeper layers relies on the routing established previously.
A simple predictor that only considers the initial embeddings lacks this conditional context, resulting in the accumulation of prediction errors across layers.
Guided by this observation, we model routing as a conditional sequence of decisions. Accordingly, we adopt a cascade design that explicitly conditions later predictions on earlier ones. This design: (1) injects earlier routing predictions as context to curb error propagation in deeper layers, and (2) offers tunable depth to trade accuracy for overhead.
As shown in \autoref{fig:predict_mode}, we stack single‑layer transformer predictors $\{M_1, M_2, \dots, M_n\}$. 
$M_1$ consumes token embeddings and predicts the expert routing $R_1$; each subsequent $M_i$ augments these embeddings with the predicted routing $R_{i-1}$ from $M_{i-1}$ to predict routing $R_i$ for its designated block of subsequent layers.

\paragraph{Quantized Predictor.}
For scenarios demanding maximum forecast fidelity, our second option is a low-bit quantized replica of the original MoE model. 
With the same model architecture and pre-trained knowledge~\cite{d2moe}, its gating networks generate expert rankings that are highly consistent with the full-precision model, ensuring reliable predictions. 
To manage the cost, we quantize both the replicas' weights and activations (e.g., to INT4). 
Although the model output might be different, we validate that its predictions closely track the original model's routing in \autoref{sec:evaluation}.

These two options offer a practical trade-off between accuracy and overhead: the cascade predictor prioritizes low latency for frequent, minor adjustments, while the quantized predictor ensures high fidelity for critical, major ones.

\subsection{Computation-Overlapped Live Migration}

\begin{figure}[t]
	\centering
	\subfloat[][Naive ``Stop-the-World'']{
		\begin{minipage}[t]{0.8\linewidth}
			\centering
			\includegraphics[width=0.9\linewidth]{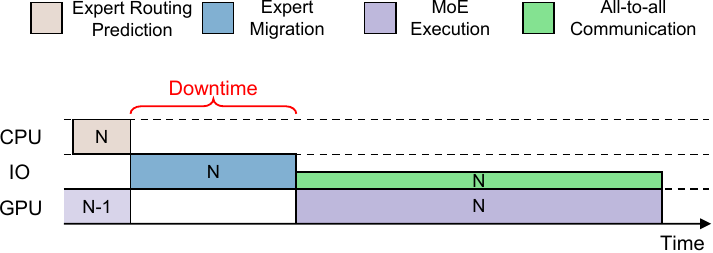 }
		\end{minipage}
        \vspace{-0.2cm}
		\label{fig:design1a}
	}\\
	\subfloat[][Naive Pipeline-based Optimization]{
		\begin{minipage}[t]{0.8\linewidth}
			\centering
			\includegraphics[width=0.9\linewidth]{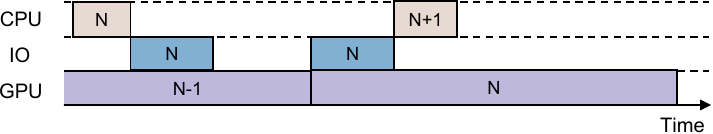 }
		\end{minipage}
        \vspace{-0.2cm}
		\label{fig:design1b}
	}\\
        \subfloat[][Our Pipelining]{
		\begin{minipage}[t]{0.8\linewidth}
			\centering
			\includegraphics[width=0.9\linewidth]{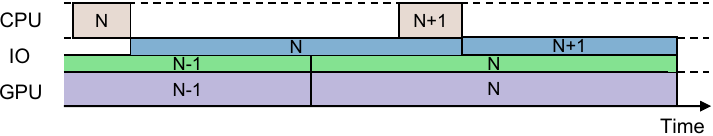 }
		\end{minipage}
        \vspace{-0.2cm}
		\label{fig:design1c}
	}\\
	\centering
	\caption{Execution timelines of serving and reconfiguration.}
    
	\label{fig:design1}
    \vspace{-0.4cm}
\end{figure}

To migrate the experts, A naive ``stop-the-world'' design, illustrated in \autoref{fig:design1a}, is inefficient because it halts all computation during the expert migration.
A naive solution is to pipeline migration with computation.
For example, the system can migrate the first half layers of experts while the experts in subsequent layers are processing tokens in \autoref{fig:design1b}. 
However, this approach overlooks a critical resource conflict: both the expert migration and the expert parallelism's all-to-all operations require communication bandwidth. Scheduling them at the same time can lead to congestion.

We thus design a fine-grained scheduling strategy. 
Its core idea is to prevent expert migration and all-to-all communication from executing concurrently, as shown in \autoref{fig:design1c}. 
As mentioned in \autoref{sec:background}, expert parallelism exists computation phase (attention, FFN) and the communication phase (all-to-all); our strategy schedules expert migrations to run only during computation phases when the communication network is idle. 
By leveraging the expert routing prediction, the module estimates the duration of the computation and communication windows, ensuring the migration completes before the next all-to-all operation begins. 
This approach effectively hides the migration overhead, ensuring the expert placement reconfiguration does not impact the critical path of inference.

\section{Relaxation-Based Expert Placement Algorithm}
\label{design2}

\subsection{System Model}
\label{sec:system_model}

\paragraph{GPU Cluster}
We model a cluster as a set of GPUs $\mathcal{G}$. The maximum number of experts per GPU per layer due to memory limitations is denoted by $C_{\text{max}}$.

\paragraph{Expert Placement}
The MoE model consists of $L$ MoE layers, each having $E$ experts, indexed from $1$ to $E$. 
The placement map $P$ assigns each expert to a host GPU.
For example, $P(l,i)$ denotes the GPU hosting expert $i$ in layer $l$.

\paragraph{Request Workload}
For each request awaiting execution, the predictor in \autoref{design1} predicts the token-to-expert routing pattern.
We use a \emph{workload matrix}, $W$, to indicate the predicted workload.
An element $W_{j,i}^l$ of this matrix indicates the number of tokens on GPU $j$, routed to expert $i$ in layer $l$.

\paragraph{Computation Latency}
The workload on a GPU is dictated by the total workload of the experts it hosts. Given the workload matrix $W^l$ and a placement map $P$, the load on GPU $j$ at layer $l$, denoted $\lambda_{j}^{l}$, is the sum of demands for all experts assigned to it:
\begin{equation}
    \lambda_{j}^{l}(P) = \sum_{i=1}^{E} \mathbb{I}[P(l,i)=j] \left( \sum_{k \in \mathcal{G}} W_{k,i}^l \right),
\end{equation}
where $\mathbb{I}[\cdot]$ is the indicator function. 
The layer's computation latency, $T_{\text{comp}}^{l}$, is determined by the straggler GPU. 
We use the function $f_{\text{comp}}(\lambda)$, derived from offline profiling, to denote the time required to process $\lambda$ tokens. 
The total computation latency is:
\begin{equation}
    T_{\text{comp}}(P) = \sum_{l=1}^{L}\left(\max_{j \in \mathcal{G}} f_{\text{comp}}\left(\lambda_{j}^{l}(P)\right)\right) . 
\end{equation}

\paragraph{Communication Latency}
Let $d$ be the data size for a single token. For a given expert placement $P$, the total data volume to be transferred from GPU $j$ to GPU $k$ at layer $l$ is denoted by $D_{j,k}^l(P)$, calculated as:
\begin{equation}
    D_{j,k}^l(P) = \sum_{i : P(l, i) = k} d\cdot W_{j,i}^l.
\end{equation}

The transmission latency between GPUs is modelled by the function $f_{\text{comm}}( D_{j,k}^l(P))$. It is empirically profiled and is sensitive to the network topology (e.g., intra-node vs. inter-node communication). The layer's overall communication latency is dictated by the straggler, calculated as:
\begin{equation}
    T_{\text{comm}}(P) = \sum_{l=1}^{L}\left(\max_{j \in \mathcal{G}} \sum_{\substack{k \neq j}}f_{\text{comm}}\left( D_{j,k}^l(P) \right)\right).
\end{equation}

\subsection{Problem Formulation}
\label{problem}

Our goal is to find an expert placement that minimizes the total latency of computation and communication: 

\begin{align}
    \mathop{\arg\min}\limits_{P} \quad & T_{\text{comp}}(P) + T_{\text{comm}}(P)\label{eq:p1_obj} \\
    \text{s.t.} \quad & \sum_{i=1}^{E} \mathbb{I}[P(l, i) = j] \leq C_{\text{max}},
     \quad \forall j, \forall l .\notag\label{eq:p1_const}
\end{align}

Solving this problem is challenging, since it involves a trade-off between co-locating experts to save communication and distributing them to balance computation. 
Moreover, the problem is NP-hard, reducible to the Generalized Assignment Problem (GAP)~\cite{savelsbergh1997branch}—and features an immense search space that thwarts general heuristic algorithms. 

\subsection{Algorithm Design}

To solve this challenging problem, our proposed algorithm finds a near-optimal placement in two main phases. First, we transform the problem into a feasibility problem and use a binary search over the total latency budget, $\beta$, to find the minimum achievable latency. At each search step, we solve a Linear Program (LP) relaxation of the problem. Second, after finding the optimal fractional solution $\bar{x}_{\text{best}}$ corresponding to the tightest budget, we employ an iterative rounding scheme to convert it into a final integer expert placement $P$. The overall process is outlined in \autoref{alg:main}.

\paragraph{LP Relaxation with Linearization}
We define a binary variable $x_{l,i,j} \in \{0, 1\}$ to indicate if expert $i$ in layer $l$ is placed on GPU $j$. To create a tractable LP, we first express the computational load and communication data volume using these variables. Let $\lambda_{j}^{l}(x) = \sum_{i=1}^{E} x_{l,i,j} ( \sum_{k \in \mathcal{G}} W_{k,i}^l )$ be the expected computational load on GPU $j$ at layer $l$. Similarly, let $D_{j,k}^l(x) = d \cdot \sum_{i=1}^{E} W_{j,i}^l x_{l,i,k}$ be the expected data volume transferred from GPU $j$ to $k$.

We then linearize the overall non-linear objective function by introducing auxiliary variables for the \textit{max} operators and employing a standard piecewise linear approximation for the latency functions $f_{\text{comp}}$ and $f_{\text{comm}}$. For a given latency budget $\beta$, the resulting feasibility LP is defined as:
\begin{subequations}
\label{problem:lp_feasibility}
\begin{align}
    \text{find} \quad & x \\
    \text{s.t.} \quad & \sum_{l=1}^L (\tau^l_{\text{comp}} + \tau^l_{\text{comm}}) \le \beta \label{eq:lp_budget}\\
    & \tau^l_{\text{comp}} \ge f_{\text{comp}}\left( \lambda_{j}^{l}(x) \right), && \forall l, j \label{eq:lp_comp}\\
    & \tau^l_{\text{comm}} \ge \sum_{k \neq j} f_{\text{comm}}\left( D_{j,k}^l(x) \right), && \forall l, j \label{eq:lp_comm}\\
    & \sum_{j \in \mathcal{G}} x_{l,i,j} \ge 1, &&\forall l, i \label{eq:lp_assign}\\
    & \sum_{i=1}^{E} x_{l,i,j} \le C_{\text{max}}, &&\forall l, j \label{eq:lp_capacity}\\
    & 0 \le x_{l,i,j} \le 1, &&\forall l, i, j \label{eq:lp_vars}
\end{align}
\end{subequations}
where $\tau^l_{\text{comp}}$ and $\tau^l_{\text{comm}}$ are auxiliary variables representing the computation and communication latency at layer $l$.

\paragraph{Iterative Rounding}
After obtaining the optimal fractional solution $\bar{x}_{\text{best}}$, we adapt an iterative rounding design to derive a final integer placement. The procedure is a randomized algorithm that performs a constrained ``random walk'' in the solution space. As proved in \S\ref{sec:analysis}, this process guarantees a near-optimal integral solution in polynomial time.

The rounding process consists of $O(\log N_{\text{vars}})$ (see \autoref{sec:analysis}) macro-iterations. In each, we further reduce the number of fractional variables. The core mechanisms, which are essential for the subsequent analysis, are detailed below.

\textit{1. Initialization and Variable Scaling:}
At the start of a macro-iteration, we classify each fractional variable $x_v$ (where $v$ is a multi-index for $(l,i,j)$) based on its value. For each variable, we define a discrete \emph{scale} $s_v=2^{-k}$ if its value $x_v$ or $1-x_v$ falls into the range $(2^{-(k+1)}, 2^{-k}]$. This scale is crucial as it dictates the magnitude of perturbation the variable will receive; variables closer to an integer value have a smaller scale and are moved more cautiously.

\begin{algorithm}[t]
\SetAlgoLined
\caption{Relaxation-based Placement Algorithm}
\label{alg:main}
\KwIn{Workload matrices $\{W^1,W^2,\dots,W^L\}$, GPU set $\mathcal{G}$, capacity $C_{\text{max}}$}
\KwOut{Optimal expert placement map $P$}

\BlankLine
$\text{Initialize the budget bound }(\beta_{\text{low}}, \beta_{\text{high}})$\;
$\bar{x}_{\text{best}} \gets \text{null}$\;
\While{$\beta_\text{high} $ - $ \beta_\text{low}> \epsilon$}{
    $\beta \gets (\beta_{\text{low}} + \beta_{\text{high}}) / 2$\;
    $\text{solve LP relaxation with given } \beta \rightarrow \bar{x}$\;
    $\text{update search bounds }(\beta_\text{high}, \beta_\text{low})$\;
}
$\text{get integer placement P }\gets \bar{x}_{\text{best}}\text{ by iterative rounding}$\;
\Return{$P$}\;

\end{algorithm}
\vspace{-0.4cm}

\textit{2. Constrained Random Walk:}
The random walk is performed within a carefully defined polytope $Q$ that preserves feasibility. This polytope is defined by three sets of constraints: (i) the original LP constraints (Eq. \ref{problem:lp_feasibility}), (ii) scale-based variable bounds of the form $x_v \le \alpha \cdot s_v$ (and $1-x_v \le \alpha \cdot s_v$), where $\alpha$ is a constant, and (iii) an additional set of \emph{scale-preserving constraints}. These take the form:
\begin{equation}
    \sum_{v \in U_k} x_v = \sum_{v \in U_k} x_v^{(0)}, \quad \forall k
\end{equation}
where $U_k$ is the set of variables with the same scale $s_v=2^{-k}$, and $x^{(0)}$ is the solution at the beginning of the macro-iteration. These constraints ensure that while individual variables are perturbed, their collective sum within a scale group remains constant, thus forcing some variables towards the integer boundaries of 0 or 1.

The walk proceeds for $T_{\text{micro}}$ micro-steps, where $T_{\text{micro}}$ is a sufficiently large polynomial in $N_{\text{vars}}$. At each step $t$, the update direction $\mathbf{g}^{(t)}$ is sampled from the null space of the tight constraints of $Q$. This direction is scaled such that variables with smaller scales are perturbed less. The solution is then updated by a small, fixed step size $\gamma$:
\begin{equation}
    x^{(t+1)} \leftarrow x^{(t)} + \gamma \cdot \mathbf{g}^{(t)}
\end{equation}

\textit{3. Integrality Guarantee and Termination:}
If any variable $x_v$ exits the $[0,1]$ range during the walk, it is permanently ``clamped'' to 0 or 1. This is the primary mechanism that reduces the number of fractional variables. The convergence of this process is theoretically guaranteed by a \emph{potential function} $\Phi(x)$, defined as:
\begin{equation}
    \Phi(x) = \sum_{v \text{ is fractional}} s_v^{-2} \cdot \text{dist}(x_v, \{0,1\})^2
\end{equation}
where $\text{dist}(x_v, \{0,1\})$ is the distance of $x_v$ to its nearest integer. The algorithm is designed to ensure that the expected value of $\Phi(x)$ strictly increases with each random step. This provable progress guarantees that a constant fraction of variables become integral in each macro-iteration. After $T_{\text{micro}}$ steps, the macro-iteration concludes.

\section{Analysis}
\label{sec:analysis}

In this section, we prove the approximation guarantee of \autoref{alg:main}. 
To facilitate the following analysis, let $N_{\text{vars}} = L \cdot E \cdot |\mathcal{G}|$ denote the total number of decision variables.

\begin{theorem}
\label{thm:single_step}
    A single macro-iteration of the \emph{Iterative Rounding} scheme is a randomized polynomial-time procedure. It transforms a fractional solution $x$ into a new fractional solution $x'$ that simultaneously satisfies:
    \begin{enumerate}
        \item \textbf{Integrality Progress:} The number of fractional variables is reduced by a constant factor with high probability.
        \item \textbf{Structural Preservation:} The new solution $x'$ remains within the feasible placement polytope $Q$.
        \item \textbf{Bounded Latency Violation:} The violation of any single linear latency constraint is controllably small and dependent on the step size $\gamma$.
    \end{enumerate}
\end{theorem}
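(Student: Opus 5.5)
The argument follows the Lovett--Meka style of constrained random-walk rounding, adapted to the polytope $Q$ of the iterative rounding scheme. We treat the three claims separately, but all three rest on one fact: the dimension of the walk's subspace is large. At the start of a macro-iteration, let $n$ be the number of fractional variables. The scale-preserving constraints add at most $O(\log N_{\text{vars}})$ equalities, one per scale class $U_k$, since scales below $1/\text{poly}(N_{\text{vars}})$ may be rounded directly at negligible cost. The assignment and capacity constraints (\ref{eq:lp_assign})--(\ref{eq:lp_capacity}) contribute at most $L(E+|\mathcal{G}|)$ rows. We will show that only a small number of constraints can be tight at any time.

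\textbf{Polynomial time and structural preservation.} Each micro-step does two things. First, it computes an orthonormal basis of the null space of the currently tight constraints, which Gaussian elimination does in $\text{poly}(N_{\text{vars}})$ time. Second, it samples a Gaussian direction in that subspace, rescales coordinate $v$ by $s_v$, and takes a step of size $\gamma$. There are $T_{\text{micro}}=\text{poly}(N_{\text{vars}})$ steps, so the macro-iteration runs in polynomial time.

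For feasibility, equality constraints that are already tight are preserved exactly, because the step lies in their null space. To keep the remaining constraints satisfied, choose $\gamma \le 1/\text{poly}(N_{\text{vars}})$ and use the standard ``slack'' trick: a constraint is declared tight, and added to the null-space set, once its slack falls below $\gamma\sqrt{\log N_{\text{vars}}}$. With high probability no single Gaussian step exceeds this slack. Any small overshoot past $0$ or $1$ is absorbed by the clamping rule. Consequently $x'\in Q$, up to the latency-row violation discussed below.

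\textbf{Integrality progress.} This is the main obstacle. We use the potential $\Phi$. While the null space has dimension at least $n/2$, the expected increase of $\Phi$ per step is at least $\gamma^2\cdot\Omega(n)$, because of the $s_v^{-2}$ weighting combined with the $s_v$ rescaling of the direction. On the other hand, $\Phi$ is bounded by $O(\alpha^2 n)$ because of the scale bounds $x_v\le \alpha s_v$.

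An optional-stopping argument applied to the submartingale $\Phi(x^{(t)})-c\gamma^2 t\cdot n$ then shows the following. Within $T_{\text{micro}}=O(\alpha^2/\gamma^2)$ steps, with constant probability the subspace dimension drops below $n/2$. That can happen only if a constant fraction of the variables became tight at $0/1$, or if many variables reached their scale bounds.

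The delicate part is ruling out the second case, where variables hit the scale bounds or the latency/capacity rows instead of becoming integral. Two ingredients handle it. The scale-preserving sums mean that a variable reaching $\alpha s_v$ must be compensated by another variable in its group moving toward the boundary. In addition, the number of structural rows is $o(n)$ whenever $n$ exceeds $L(E+|\mathcal{G}|)$ by a constant factor. For smaller $n$, we fall back to extreme-point (basic solution) rounding. Repeating the macro-iteration $O(\log N_{\text{vars}})$ times boosts the constant success probability to high probability.

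\textbf{Bounded latency violation.} For a linear row $a^\top x\le b$ coming from the piecewise-linear constraints (\ref{eq:lp_comp})--(\ref{eq:lp_comm}), the change $a^\top(x'-x)$ is a martingale whose increments are $\gamma\, a^\top g^{(t)}$. Each increment has variance at most $\gamma^2\sum_v a_v^2 s_v^2$. Freedman's inequality together with a union bound over the polynomially many rows gives
\[
|a^\top(x'-x)| \le O\bigl(\gamma\sqrt{T_{\text{micro}}\log N_{\text{vars}}}\bigr)\,\|a\odot s\|_2
\]
with high probability. This bound is proportional to $\gamma$ for fixed $T_{\text{micro}}$, and it becomes a $(1+\epsilon)$ factor when the bound is summed across macro-iterations, since the scales $s_v$ shrink geometrically.
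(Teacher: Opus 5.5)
Your outline follows the paper's own proof. Null-space sampling gives Lemma~\ref{lem:preservation}; Freedman plus a union bound gives Lemma~\ref{lem:violation}; and for Lemma~\ref{lem:progress} you use the $s_v^{-2}$-weighted potential, capped at $\alpha^2 n$ by the scale bounds, with the scale-preserving sums limiting clamping at $\alpha s_v$. The genuine gap is that your part~3 is incompatible with your part~1, and the paper has the same gap when it lets $C_1$ absorb $\sqrt{T_{\text{micro}}}$. Your progress step needs $\gamma^2 T_{\text{micro}}\cdot n/2$ to exceed the cap $\alpha^2 n$ on $\Phi$, so $T_{\text{micro}}=\Theta(\alpha^2/\gamma^2)$, exactly as you set it. Substituting this into your Freedman bound gives $|a^\top(x'-x)|=O(\alpha\sqrt{\log N_{\text{vars}}})\,\|a\odot s\|_2$, and $\gamma$ cancels. ``Proportional to $\gamma$ for fixed $T_{\text{micro}}$'' is true but unusable, because shrinking $\gamma$ forces $T_{\text{micro}}$ to grow like $\gamma^{-2}$. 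Your closing remark does not repair this: only the number of fractional variables shrinks geometrically, not their scales, since a coordinate that stays in $(1/4,3/4)$ keeps $s_v=1/2$. In fact no argument can repair it. Take one layer, three experts of load $w$, two GPUs, $C_{\text{max}}=2$, $f_{\text{comp}}(\lambda)=\lambda$ and $f_{\text{comm}}\equiv 0$. The LP sets every $x_{i,j}=1/2$, giving load $1.5w$ per GPU, but every integral placement puts load $2w$ on some GPU $j$. Any process satisfying parts~1 and~2 reaches such a placement within $N_{\text{vars}}=6$ macro-iterations. So the linear form $\lambda_j(x)$ must move by at least $w/12$ in some macro-iteration, whatever $\gamma$ is. What your argument actually gives is the Lovett--Meka/Bansal-type bound: an additive error of $O(\sqrt{\log N_{\text{vars}}})\|a\odot s\|_2$ per macro-iteration. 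That bound is not tunable by $\gamma$ and cannot support the $(1+\epsilon)$ guarantee built on it.

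A second, smaller gap is in part~1. The dimension count works only while the always-enforced assignment and capacity rows, $L(E+|\mathcal{G}|)$ of them, are a small constant fraction of $n$. They are not $o(n)$ as you claim, but a fraction such as $1/8$ suffices once $n\ge 8L(E+|\mathcal{G}|)$. Below that threshold you switch to extreme-point rounding, which is not part of the macro-iteration and comes with none of the three guarantees. Moreover, a basic optimal LP solution, which is what a simplex-type solver returns, has no more fractional coordinates than tight non-box rows. That is at most $L(E+|\mathcal{G}|)$ plus the number of tight latency rows. So the regime where your walk provably makes progress may never be entered, and the unanalysed fallback would do all of the rounding. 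The paper's proof simply ignores these rows; here you have exposed a real hole rather than closed it.
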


\begin{lemma}
\label{lem:preservation}
    The rounding macro-iteration maintains feasibility with respect to all constraints defining the polytope $Q$.
\end{lemma}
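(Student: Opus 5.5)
We argue by induction on the micro-steps of a macro-iteration. The invariant is that after every micro-step $t$ the iterate $x^{(t)}$ lies in $Q$. Here $Q$ is the set of points satisfying three groups of constraints: (i) the linearized LP constraints \eqref{eq:lp_budget}--\eqref{eq:lp_vars}, (ii) the scale-based bounds $x_v \le \alpha s_v$ and $1-x_v \le \alpha s_v$, and (iii) the scale-preserving equalities $\sum_{v\in U_k} x_v = \sum_{v\in U_k} x_v^{(0)}$.

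\textbf{Base case.} At the start of a macro-iteration we have $x^{(0)}=\bar{x}$, or the output of the previous macro-iteration. Group (i) holds by the feasibility returned by the LP at the final $\beta$, or by the inductive hypothesis across macro-iterations. Group (iii) holds trivially. For group (ii), the scale definition gives $x_v$ or $1-x_v$ in $(2^{-(k+1)},2^{-k}]$, so the bound holds for any $\alpha\ge 1$. One point needs care: the side that is not small, for example $1-x_v$ when $x_v$ is small, is only bounded by $1$. I would therefore read constraint (ii) as a bound on $\mathrm{dist}(x_v,\{0,1\})$, which is consistent with how the scale is defined.

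\textbf{Inductive step.} Let $A_t$ be the set of constraints of $Q$ that are tight at $x^{(t)}$; every equality in (iii) counts as tight. The direction $\mathbf{g}^{(t)}$ is drawn from the null space of $A_t$. After the scale-dependent rescaling by a diagonal matrix $S$, the applied direction is $S\mathbf{g}^{(t)}$. For this to remain feasible, the null space has to be taken with respect to $A_tS$, so that $A_t S\mathbf{g}^{(t)}=0$. I would state this interpretation explicitly. With it, every tight constraint, including every equality, is preserved exactly, because $a^\top(x^{(t)}+\gamma S\mathbf{g}^{(t)}) = a^\top x^{(t)}$.

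For constraints that are not tight there is positive slack $\delta_a$. The step changes each of them by at most $\gamma\,\|a\|_1\,\|S\mathbf{g}^{(t)}\|_\infty$. We choose $\gamma$ small, below $\min_a \delta_a/(\|a\|_1\max\|S\mathbf{g}\|_\infty)$. Alternatively, we truncate the step at the first constraint it hits, in the standard Lovett--Meka manner. Either way no loose constraint is crossed, and one that reaches equality joins $A_{t+1}$ and stays frozen from then on.

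\textbf{Box constraints $[0,1]$.} A variable that reaches the boundary of $[0,1]$ is clamped to $0$ or $1$ and removed from the walk. Clamping keeps (i): $x_v$ stays within $[0,1]$, and the assignment and capacity constraints involving $x_v$ are tight exactly when they could be violated, so they are frozen by the null-space condition. With truncation, "exits" becomes "reaches", so clamping never changes the value of a variable.

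\textbf{Main obstacle.} The budget constraints \eqref{eq:lp_comp}--\eqref{eq:lp_comm} contain the auxiliary variables $\tau$ and the piecewise-linear $f$. I would treat $\tau$ as walk variables with scale $1$, and replace each piecewise-linear term with its epigraph constraints, so that every constraint of $Q$ is linear in the walk variables $(x,\tau)$. The null-space argument then applies uniformly. This is what separates exact preservation of $Q$, the claim of the lemma, from the $\gamma$-dependent latency violation in Theorem~\ref{thm:single_step}: that violation arises only if one chooses to drop the latency constraints from $A_t$, which would break the lemma. So I would keep them in $A_t$ and prove exact feasibility.
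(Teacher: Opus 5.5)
Your proposal is correct and uses the same feasibility-by-construction argument as the paper: steps in the null space of the tight constraints keep those constraints satisfied, and variables reaching the boundary of $[0,1]$ are frozen. It also repairs two points the paper's proof glosses over. The rescaled step must lie in the null space of $A_tS$ rather than $A_t$. And truncating steps is what makes clamping harmless: clamping a variable that has already exited $[0,1]$, as the paper describes, would break the equalities (iii) and possibly a tight assignment constraint.

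Two caveats. First, only your truncation option handles loose constraints; no single fixed $\gamma$ can stay below slacks that shrink toward zero as the walk approaches a face. Second, constraint (ii) should be the one-sided linear bound on whichever side $x_v^{(0)}$ starts. The set $\{x_v:\mathrm{dist}(x_v,\{0,1\})\le\alpha s_v\}$ is not convex when $\alpha s_v<1/2$, which conflicts with your later claim that every constraint of $Q$ is linear.
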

\begin{proof}
    This property is guaranteed by construction. As defined in the algorithm description, the random walk is confined within the polytope $Q$. The update direction $\mathbf{g}^{(t)}$ is drawn from a subspace orthogonal to all tight constraints, ensuring the solution does not violate them. Any "clamping" of variables is a projection onto the boundary of $Q$. Thus, the updated solution $x'$ remains feasible.
\end{proof}

We define the \textit{scaled variance} of a linear constraint $\Lambda(x) = \mathbf{c}^T x$ as $\text{Var}_s(\Lambda) = \sum_{v} c_v^2 s_v^2$, where the sum is over the set of fractional variables.

\begin{lemma}
\label{lem:violation}
    For any linear latency constraint $\Lambda(x) = \mathbf{c}^T x$, the total violation $|\Lambda(x') - \Lambda(x)|$ in a single macro-iteration is bounded by $C_1 \cdot \gamma \cdot \sqrt{\text{Var}_s(\Lambda)}$ with high probability, for some constant $C_1$.
\end{lemma}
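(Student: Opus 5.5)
The plan is to treat $\Lambda(x^{(t)})$ along the random walk of one macro-iteration as a martingale and apply a concentration inequality. Write $M_t = \Lambda(x^{(t)}) - \Lambda(x^{(0)})$. Each micro-step changes it by $\Delta_t = \gamma\, \mathbf{c}^T \mathbf{g}^{(t)}$. We take $\mathbf{g}^{(t)}$ to be a symmetric random vector in the null space of the tight constraints of $Q$, so $\mathbb{E}[\Delta_t \mid x^{(t)}] = 0$. Clamping a variable to $0$ or $1$ only fixes coordinates and is handled by the next step's choice of subspace. Hence $(M_t)$ is a martingale with respect to the walk's filtration.

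Next I would bound the conditional variance of each increment by the scaled variance. The direction is scaled coordinatewise by $s_v$, i.e. $\mathbf{g}^{(t)} = S\,\mathbf{h}^{(t)}$ with $S = \mathrm{diag}(s_v)$ and $\mathbf{h}^{(t)}$ isotropic in the allowed subspace with covariance at most $I$. Then
\begin{equation*}
\mathbb{E}[\Delta_t^2 \mid x^{(t)}] \le \gamma^2 \sum_v c_v^2 s_v^2 = \gamma^2\, \mathrm{Var}_s(\Lambda),
\end{equation*}
and the sum runs only over currently fractional variables, because frozen ones receive no perturbation. Since the scales are fixed at the start of the macro-iteration and the set of fractional variables only shrinks, this bound holds uniformly in $t$. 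I would also take $\mathbf{h}^{(t)}$ Gaussian, or bounded with subgaussian marginals, so that each increment is conditionally subgaussian with parameter $\gamma\sqrt{\mathrm{Var}_s(\Lambda)}$.

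Finally, I would apply Freedman's inequality (or Azuma's inequality in its subgaussian form) to $M_{T_{\text{micro}}}$. This gives $|M_{T}| \le C\sqrt{\log(1/\delta)}\,\gamma\sqrt{T\,\mathrm{Var}_s(\Lambda)}$ with probability at least $1-\delta$. Choosing $\delta = N_{\text{vars}}^{-c}$ makes the bound hold with high probability, and a union bound extends it to all polynomially many latency constraints.

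The main obstacle is the factor $\sqrt{T_{\text{micro}}}$. The lemma claims a bound of $C_1\gamma\sqrt{\mathrm{Var}_s}$ with no dependence on $T_{\text{micro}}$. My first attempt would absorb it by normalizing the step size: interpret $\gamma$ as the effective total displacement scale $\gamma_0\sqrt{T_{\text{micro}}}$, which is the natural choice since the box constraints $|x_v - x_v^{(0)}| \le \alpha s_v$ already cap the total movement per coordinate. Alternatively, I would argue directly from the box constraints in (ii): the displacement $x'-x$ satisfies $|x'_v - x_v| \le 2\alpha s_v$. Combining this with the martingale structure, the sum of squared increments is controlled by the final squared displacement, which yields the bound with $C_1$ depending on $\alpha$ and the log factor. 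I would state explicitly which normalization of $\gamma$ is used, and fold $\sqrt{\log N_{\text{vars}}}$ into the constant $C_1$ together with the high-probability qualifier.
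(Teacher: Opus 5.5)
Your proposal is essentially the paper's proof. The paper also writes the change as martingale differences $Z_t=\gamma\,\mathbf{c}^T\mathbf{g}^{(t)}$, bounds their conditional variance by $\gamma^2\,\text{Var}_s(\Lambda)$, applies Freedman's inequality, and takes a union bound over the latency constraints. It then simply lets $C_1$ absorb $\sqrt{T_{\text{micro}}}$ (and the logarithmic factors), which is exactly your first option of treating $\gamma\sqrt{T_{\text{micro}}}$ as the effective step.

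Drop the box-constraint alternative, though. The bound $|x'_v-x_v|\le 2\alpha s_v$ only gives the $\ell_1$-type estimate $2\alpha\sum_v|c_v|s_v$, which can exceed $\sqrt{\text{Var}_s(\Lambda)}$ by a factor of $\sqrt{f}$. Nor do per-coordinate quadratic variations control that of $\mathbf{c}^T x$, because $\mathbf{g}^{(t)}$ has cross-covariances. The $\sqrt{\text{Var}_s}$ scaling comes only from capping the total walk time $T_{\text{micro}}\gamma^2$.
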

\begin{proof}
    The total change, $\Delta\Lambda = \sum_{t=0}^{T_{\text{micro}}-1} Z_t$, is a sum of martingale differences $Z_t = \gamma \mathbf{c}^T \mathbf{g}^{(t)}$. The conditional variance of each step is bounded by $\mathbb{E}[Z_t^2 | \text{history}] \le \gamma^2 \text{Var}_s(\Lambda)$. The total conditional variance over $T_{\text{micro}}$ steps is thus $W \le T_{\text{micro}} \gamma^2 \text{Var}_s(\Lambda)$. By applying martingale concentration inequalities, such as Freedman's Theorem~\cite{fre75}—a tool also central to the analysis in~\cite{discrepancy_rounding}—the probability of the total deviation $|\Delta\Lambda|$ exceeding a threshold proportional to $\gamma \sqrt{W / \gamma^2} = \gamma \sqrt{T_{\text{micro}} \text{Var}_s(\Lambda)}$ is exponentially small. A union bound over all latency constraints completes the argument, with $C_1$ absorbing terms like $\sqrt{T_{\text{micro}}}$.
\end{proof}

\begin{claim}
\label{claim:bounded_variance}
    For any linear latency constraint $\Lambda$ and a solution with $f$ fractional variables, the scaled variance is bounded: $\text{Var}_s(\Lambda) \le C_2 \cdot f$, for a constant $C_2$.
\end{claim}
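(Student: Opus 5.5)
The plan is to bound $\text{Var}_s(\Lambda)=\sum_{v\text{ fractional}} c_v^2 s_v^2$ term by term, showing that each summand is at most a constant $C_2$. Summing over the $f$ fractional variables then gives the claim.

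Two ingredients are needed.

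\textbf{(a) Scales are bounded.} By the definition of scale in the iterative rounding, $s_v=2^{-k}$ with $k\ge 0$ whenever $x_v$ or $1-x_v$ lies in $(2^{-(k+1)},2^{-k}]$. Hence $s_v\le 1$, and in fact $s_v\le 1/2$ once we take the nearer of $x_v$ and $1-x_v$.

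\textbf{(b) Coefficients are bounded.} We need $|c_v|\le c_{\max}$ for a constant $c_{\max}$ that does not depend on $f$. The linear latency constraints of Problem~\ref{problem:lp_feasibility} arise from the piecewise-linear approximation of $f_{\text{comp}}$ and $f_{\text{comm}}$, applied to $\lambda_j^l(x)=\sum_i x_{l,i,j}\sum_k W_{k,i}^l$ and to $D_{j,k}^l(x)=d\sum_i W^l_{j,i}x_{l,i,k}$. Each resulting coefficient is a slope of a linear piece times $\sum_k W^l_{k,i}$ or times $d\,W^l_{j,i}$. I would normalize each latency constraint, dividing by its largest coefficient or equivalently measuring latency in units of $\max_{l,i}\sum_k W^l_{k,i}$, which is standard in discrepancy-based rounding. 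After normalization $|c_v|\le 1$. Without normalization, $C_2$ simply absorbs $\max(\text{slope})\cdot d\cdot\max W$. Either way this depends only on the instance data and not on $f$.

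Combining (a) and (b), each fractional term contributes at most $c_{\max}^2\cdot 1$, so $\text{Var}_s(\Lambda)\le c_{\max}^2 f$ and we can take $C_2=c_{\max}^2$.

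The main obstacle is step (b): justifying that $C_2$ is a genuine constant. The workload entries $W$ scale with batch size, and the auxiliary variables $\tau^l$ also appear in the constraints. I would handle the $\tau$ variables by treating constraints~\eqref{eq:lp_comp}--\eqref{eq:lp_comm} as linear constraints on $x$ alone, with $\tau$ fixed at its LP value, so that only $x_v$ coefficients enter the sum. The batch-size scaling is exactly where the normalization is needed.

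A sharper bound would exploit the small scales $s_v$ of nearly integral variables to charge them less, but this is not required for the linear-in-$f$ statement.
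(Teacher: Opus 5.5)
Your proposal is correct and follows the paper's proof: bound each summand $c_v^2 s_v^2$ using $|c_v|\le c_{\max}$ from the workload data and a constant bound on the scales, then sum over the $f$ fractional variables. The paper uses $s_v\le 1/2$ to get $C_2=c_{\max}^2/4$, whereas your cruder $s_v\le 1$ gives $C_2=c_{\max}^2$; your remarks on normalization and on holding the $\tau$ variables fixed go a little beyond what the paper states but do not change the argument.
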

\begin{proof}
    The scaled variance is a sum over the $f$ fractional variables: $\text{Var}_s(\Lambda) = \sum_{v \in F} c_v^2 s_v^2$. The coefficients $c_v$ are derived from the workload matrices and are bounded by a maximum value $c_{\max}$. The scales $s_v$ are, by definition, at most $1/2$, so $s_v^2 \le 1/4$. Thus, $\text{Var}_s(\Lambda) \le \sum_{v \in F} c_{\max}^2 \cdot (1/4) = f \cdot (c_{\max}^2/4)$. Let $C_2 = c_{\max}^2/4$.
\end{proof}

\begin{lemma}[]
\label{lem:progress}
    Let $F$ and $F'$ be the sets of fractional variables before and after a single macro-iteration. With high probability, $|F'| \le (1-\delta)|F|$ for some constant $\delta > 0$.
\end{lemma}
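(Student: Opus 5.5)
The plan is a potential-function argument in the style of the discrepancy-based iterative rounding of~\cite{discrepancy_rounding}. It uses $\Phi(x)=\sum_{v\in F} s_v^{-2}\,\mathrm{dist}(x_v,\{0,1\})^2$ and combines three facts: $\Phi$ is bounded above at all times, $\Phi$ grows in expectation at a steady rate while many variables remain fractional, and a variable can only stop contributing growth by becoming integral or tight. Together these force a constant fraction of $F$ to be clamped.

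\emph{Step 1: count the tight constraints.} At any micro-step, the subspace from which $\mathbf{g}^{(t)}$ is drawn is orthogonal to the tight constraints of $Q$. I would show that, besides the variable-bound constraints, the number of tight constraints is at most $\kappa|F|$ for some $\kappa<1$. The assignment and capacity constraints (\ref{eq:lp_assign}), (\ref{eq:lp_capacity}) each involve at least two fractional variables when tight, so there are at most $|F|/2$ of them. The scale-preserving constraints number only $O(\log N_{\text{vars}})$. The latency constraints (\ref{eq:lp_comp}), (\ref{eq:lp_comm}) I would treat by the standard trick of dropping all but a $\kappa'|F|$-sized subset and controlling the rest via Lemma~\ref{lem:violation}. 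This leaves a null space of dimension at least $(1-\kappa)|F|$, as long as fewer than $\delta|F|$ variables have hit their scale bounds $x_v\le\alpha s_v$.

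\emph{Step 2: show $\Phi$ grows in expectation.} Take $\mathbf{g}^{(t)}$ to be a Gaussian in that subspace, with coordinate $v$ scaled by $s_v$. Then $\mathbb{E}[\Phi(x^{(t+1)})-\Phi(x^{(t)})]\ge \gamma^2\sum_{v} s_v^{-2}\,\mathbb{E}[(g_v^{(t)})^2]$. The first-order term vanishes because $\mathbf{g}$ has mean zero. Since the scaled projection has trace at least $(1-\kappa)|F|$, this gives
\begin{equation}
\mathbb{E}[\Delta\Phi]\ge c\,\gamma^2|F_t|,
\end{equation}
where $F_t$ is the set of currently fractional, non-frozen variables. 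On the other side, each fractional variable contributes at most $s_v^{-2}(\alpha s_v)^2=\alpha^2$ to $\Phi$, so $\Phi\le\alpha^2|F|$ deterministically.

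\emph{Step 3: conclude.} Suppose, for contradiction, that $|F_t|\ge(1-\delta)|F|$ for all $T_{\text{micro}}$ steps. Summing Step~2 gives $\mathbb{E}[\Phi_{T}]\ge c\gamma^2 T_{\text{micro}}(1-\delta)|F|$. This exceeds $\alpha^2|F|$ once $T_{\text{micro}}\gg \alpha^2/(c\gamma^2)$. To turn the expectation bound into a high-probability one, I would apply Freedman's inequality to the martingale $\Phi_t-\sum_s\mathbb{E}[\Delta\Phi_s\mid\mathcal{F}_s]$, exactly as in Lemma~\ref{lem:violation}. Its increments are bounded by $O(\gamma\alpha)$ per coordinate, which yields failure probability $\exp(-\Omega(|F|))$. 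Hence at least $\delta|F|$ variables must exit: either they are clamped to $\{0,1\}$, or they reach the scale bound. A scale-bound hit halves $s_v$, and the scale-preserving constraints (Lemma~\ref{lem:preservation}) push the group mass toward the integers, so a constant fraction of these also round. This gives $|F'|\le(1-\delta)|F|$.

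\emph{Main obstacle.} I expect the hardest step to be Step~1: bounding the number of tight latency constraints, so that the walk retains a linear-dimension null space. The latency constraints (\ref{eq:lp_comp}) and (\ref{eq:lp_comm}) can number up to $L|\mathcal{G}|$ per family, which may exceed $|F|$ late in the rounding. My first attempt would be to relax all but the $\kappa'|F|$ most recently tight latency constraints, absorbing the resulting error through Lemma~\ref{lem:violation} and Claim~\ref{claim:bounded_variance}. If that fails, I would instead require $|F|\gtrsim L|\mathcal{G}|$ and finish the remaining variables by a final greedy rounding.
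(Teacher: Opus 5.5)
Your outline follows the same route as the paper's sketch: $\Phi$ is capped by the scale bounds, its expected growth is proportional to the walk dimension, latency constraints go through Lemma~\ref{lem:violation}, and non-integral clamps go through the scale-preserving constraints. However, Step~1 fails as written.

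Each fractional $x_{l,i,j}$ lies in exactly one assignment row $(l,i)$ and one capacity row $(l,j)$. Your parity argument therefore gives at most $|F|/2$ relevant tight rows \emph{per family}, i.e.\ up to $|F|$ in total, not $|F|/2$. These rows form the incidence matrix of the bipartite fractional-support graph. If that graph is one long cycle through tight nodes (values alternating $a,1-a$), the rows have rank $|F|-1$, and the walk space is at most one-dimensional.

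Nor can you hope these rows are rarely tight. When $E=|\mathcal{G}|\,C_{\text{max}}$ (the paper's own example: $64$ experts on $4$ GPUs with $C_{\text{max}}=16$), summing (\ref{eq:lp_assign}) and (\ref{eq:lp_capacity}) over a layer shows that every assignment and capacity constraint is tight at \emph{every} feasible point. So neither the $(1-\kappa)|F|$ dimension bound nor $\mathbb{E}[\Delta\Phi]\ge c\gamma^2|F_t|$ follows, and the contradiction in Step~3 collapses. Closing this needs a genuinely new ingredient. One option is demoting the capacity rows to soft constraints controlled like the latency ones, which weakens the guarantee to approximate capacity. Another is an argument that exploits the transportation structure directly. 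The paper's own sketch never accounts for these rows either, so you have exposed an implicit assumption rather than discharged it.

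Step~3 also misreads the scale-preserving constraints. A variable reaching $x_v=\alpha s_v$ has moved \emph{away} from integrality, so its scale does not halve and nothing forces it to round. What these constraints actually give, and what the paper invokes, is a counting bound. In a group $U_k$ of near-$0$ variables, $\sum_{v\in U_k}x_v$ stays equal to its initial value, which is at most $|U_k|2^{-k}$. So at most $|U_k|/\alpha$ of them can sit at $\alpha 2^{-k}$ simultaneously. The same holds with $1-x_v$ for near-$1$ variables, which must be grouped separately for this to work. Hence at most $|F|/\alpha$ variables freeze at non-integral bounds, and choosing $\alpha\ge 2/\delta$ leaves at least $(\delta/2)|F|$ integral ones. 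This also discharges your Step~1 proviso about scale-bound hits. That repair is routine; the Step~1 count is the real obstacle.
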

\begin{proof}
    The proof uses a potential function argument, similar to~\cite{ban10, lm12}. The total expected increase in the potential function $\Phi(x)$ over a macro-iteration is lower-bounded: $\mathbb{E}[\Phi(x') - \Phi(x)] \ge T_{\text{micro}} \gamma^2 \mathbb{E}[\dim(\mathcal{V}')] - C_{\text{trunc}}$, where $C_{\text{trunc}}$ is a small constant accounting for truncation effects.
    
    The potential function $\Phi(x')$ is bounded above by $O(N_{\text{vars}})$ due to the scale-based constraints $x_v \le \alpha \cdot s_v$ within the polytope $Q$. This implies that the average walk space dimension, $\mathbb{E}[\dim(\mathcal{V}')] = \mathbb{E}[|F| - |\mathcal{C}_{\text{tight}}|]$, must be small, which forces the expected number of tight constraints $\mathbb{E}[|\mathcal{C}_{\text{tight}}|]$ to be large.

    Crucially, these tight constraints must correspond to variables becoming integral. The number of tight latency constraints is shown to be small by applying Lemma \ref{lem:violation}. The number of variables clamped to non-integral boundaries is limited by the scale-preserving constraints. As detailed in~\cite{discrepancy_rounding}, these constraints imply that only a small fraction of variables can be clamped at non-integral boundaries. Therefore, the majority of tight constraints must originate from variables being clamped to $\{0, 1\}$, forcing a constant fraction of fractional variables to become integral.
\end{proof}

\begin{proof}[Proof of Theorem \ref{thm:single_step}]
    The theorem's three properties follow directly from Lemmas \ref{lem:preservation}, \ref{lem:violation}, and \ref{lem:progress}. The polynomial runtime follows from its construction from polynomially-solvable linear algebra operations.
\end{proof}

\begin{theorem}
\label{thm:approximation_guarantee}
    For any valid input, \autoref{alg:main} is a randomized polynomial-time procedure that, for any $\epsilon > 0$, returns an integer placement $P$ with total latency $T(P) \le (1+\epsilon)\beta^\star$ with high probability, where $\beta^\star$ is the optimal LP latency.
\end{theorem}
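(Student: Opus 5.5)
The plan is to chain the binary search phase with the rounding analysis of Theorem~\ref{thm:single_step}, and to charge every source of error against the budget $\beta$.

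\textbf{Step 1 (binary search).} I will show that the binary search in \autoref{alg:main} returns a budget $\beta_{\text{best}} \le \beta^\star + \epsilon_0$ together with a feasible fractional solution $\bar{x}_{\text{best}}$.
\begin{itemize}
\item Feasibility of the LP \eqref{problem:lp_feasibility} is monotone in $\beta$, since any solution feasible for $\beta$ is feasible for every larger budget.
\item The initial interval $[\beta_{\text{low}},\beta_{\text{high}}]$ has polynomially bounded bit length. For example, take $\beta_{\text{low}}=0$ and let $\beta_{\text{high}}$ be the latency of any capacity-feasible placement.
\item Hence $O(\log((\beta_{\text{high}}-\beta_{\text{low}})/\epsilon_0))$ LP solves suffice, each in polynomial time via the ellipsoid or interior-point method.
\item Choosing $\epsilon_0 = \epsilon\beta^\star/3$ (using $\beta_{\text{low}}$ as a proxy lower bound) gives $\beta_{\text{best}}\le(1+\epsilon/3)\beta^\star$.
\end{itemize}

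\textbf{Step 2 (rounding terminates in polynomially many rounds).} By Lemma~\ref{lem:progress}, each macro-iteration shrinks the fractional set by a factor $(1-\delta)$ with high probability. Starting from $|F_0|\le N_{\text{vars}}$, after $R=O(\log N_{\text{vars}}/\delta)$ macro-iterations no fractional variables remain. A union bound over the $R$ rounds keeps the failure probability polynomially small.

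By Lemma~\ref{lem:preservation}, every intermediate point stays in $Q$. The assignment constraints \eqref{eq:lp_assign} and capacity constraints \eqref{eq:lp_capacity} are therefore exactly satisfied by the final $0/1$ vector. If an expert ends up with several copies, I will keep one copy to obtain a map $P$; dropping copies does not increase capacity usage. I still need to argue that this does not increase latency, or else interpret the multiple copies as replication.

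\textbf{Step 3 (latency violation, the main obstacle).} I will bound the accumulated violation of each linearized latency constraint \eqref{eq:lp_comp}--\eqref{eq:lp_comm}, whose piecewise-linear pieces are linear functionals $\Lambda$.
\begin{itemize}
\item In macro-iteration $r$, Lemma~\ref{lem:violation} and Claim~\ref{claim:bounded_variance} give a violation at most $C_1\gamma\sqrt{C_2|F_r|}\le C_1\gamma\sqrt{C_2 N_{\text{vars}}}(1-\delta)^{r/2}$.
\item This is a geometric series, so the total over all rounds is $O(\gamma\sqrt{N_{\text{vars}}}/(1-\sqrt{1-\delta}))$.
\item Summing over the $2L$ auxiliary latency terms, the rounded solution has objective at most $\beta_{\text{best}} + O(L\gamma\sqrt{N_{\text{vars}}})$.
\end{itemize}

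The difficulty is that $C_1$ absorbs $\sqrt{T_{\text{micro}}}$, so $\gamma$ must be chosen as an inverse polynomial, roughly $\gamma \le \epsilon\beta^\star/(3C'L\sqrt{N_{\text{vars}}T_{\text{micro}}})$. At the same time $T_{\text{micro}}\gamma^2$ must stay large enough for the potential argument in Lemma~\ref{lem:progress} to drive progress. I would first try fixing $T_{\text{micro}}\gamma^2=\Theta(1)$, which keeps the progress bound intact. The violation then becomes additive $O(\sqrt{N_{\text{vars}}})$ times a coefficient bound, and it must be made relative by normalizing the coefficients $c_v$ by $\beta^\star$. This means assuming that per-expert loads are small relative to the optimum, or rescaling the piecewise-linear slopes.

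Finally, I will add the piecewise-linear approximation error of $f_{\text{comp}}$ and $f_{\text{comm}}$, taking enough breakpoints that this error is at most $\epsilon\beta^\star/3$. Combining the three error terms gives $T(P)\le(1+\epsilon)\beta^\star$ with high probability, in polynomial total time.
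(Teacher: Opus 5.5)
Your skeleton matches the paper's own proof: slack in the budget, Lemma~\ref{lem:violation} with Claim~\ref{claim:bounded_variance} in each macro-iteration, a geometric series from Lemma~\ref{lem:progress}, then a choice of $\gamma$. In Step~3 you have found exactly where it breaks, and the paper's proof makes the very move you distrust. It takes $\gamma\le(\epsilon/2)\beta^\star/(C_4\sqrt{N_{\text{vars}}})$ as if $C_4$ were a constant. But $C_4$ contains $C_1$, which absorbs $\sqrt{T_{\text{micro}}}$. Meanwhile Lemma~\ref{lem:progress} forces progress only when $T_{\text{micro}}\gamma^2$ exceeds a suitable constant, because the scale bounds keep $\Phi=O(|F|)$. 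Hence the drift of a latency constraint per macro-iteration is of order $\gamma\sqrt{T_{\text{micro}}\,\text{Var}_s(\Lambda)}=\Theta(\sqrt{\text{Var}_s(\Lambda)})$, and shrinking $\gamma$ buys nothing.

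Your proposed repair does not close this. With $T_{\text{micro}}\gamma^2=\Theta(1)$ you are left with an additive error of order $L\,c_{\max}\sqrt{N_{\text{vars}}}$. Rescaling the slopes of $f_{\text{comp}}$ and $f_{\text{comm}}$ multiplies $c_{\max}$ and $\beta^\star$ by the same factor, so it cannot make that error relative. Assuming per-expert loads are small is an extra hypothesis that ``for any valid input'' does not grant. You would in fact need $L\,c_{\max}\sqrt{N_{\text{vars}}}\lesssim\epsilon\beta^\star$, not merely $c_{\max}\lesssim\epsilon\beta^\star$.

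More fundamentally, no argument can prove the statement as written, because the LP \eqref{problem:lp_feasibility} has a large integrality gap. Take $L=E=1$, $\mathcal{G}=\{1,2\}$, $C_{\text{max}}=1$, $d=1$, $f_{\text{comp}}(\lambda)=\lambda$, $f_{\text{comm}}(D)=D$, and $W^1_{1,1}=W^1_{2,1}=w/2$.
\begin{itemize}
\item The point $x_{1,1,1}=x_{1,1,2}=1/2$ gives load $w/2$ on each GPU and traffic $w/4$ in each direction, and $\beta^\star=3w/4$ exactly.
\item Every integral placement, including the replicated one $x_{1,1,1}=x_{1,1,2}=1$, puts load $w$ on some GPU and has some GPU sending $w/2$. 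So $T(P)=3w/2=2\beta^\star$.
\item With $m$ GPUs and $W^1_{j,1}=w/m$ for all $j$, the same computation gives the ratio $m(m+1)/(2m-1)>m/2$. So the bound fails for every fixed $\epsilon$ once $m\ge 2(1+\epsilon)$.
\end{itemize}

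What this route can honestly deliver is an additive bound $T(P)\le\beta^\star+O(L\,c_{\max}\sqrt{N_{\text{vars}}})$, up to logarithmic factors and the linearization error. Alternatively, it gives the $(1+\epsilon)$ bound under an explicitly stated small-coefficient assumption. The theorem has to be restated accordingly.

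Your Step~2 worry, by contrast, is harmless. $\lambda_j^l(x)$ and $D_{j,k}^l(x)$ are coordinatewise nondecreasing in $x$, so for nondecreasing $f_{\text{comp}},f_{\text{comm}}$ dropping surplus copies never increases latency.
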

\begin{proof}
    We begin by solving the LP with a budget of $\beta' = (1 + \epsilon/2)\beta^\star$, creating an error slack of $(\epsilon/2)\beta^\star$. The rounding process is then executed for $T = O(\log N_{\text{vars}})$ macro-iterations. The final latency is $T(P) \le \beta' + \sum_{t=1}^{T} \Delta_t$, where $\Delta_t$ is the latency violation from macro-iteration $t$.

    We now bound the total accumulated violation. From Lemma \ref{lem:violation} and Claim \ref{claim:bounded_variance}, the violation in a single step $t$ (starting with $f_{t-1}$ fractional variables) is bounded by:
    $$ \Delta_t \le C_1 \cdot \gamma \cdot \sqrt{\text{Var}_s(\Lambda_t)} \le C_1 \cdot \gamma \cdot \sqrt{C_2 \cdot f_{t-1}} $$
    Let $C_3 = C_1\sqrt{C_2}$. From Lemma \ref{lem:progress}, we know that the number of fractional variables decreases geometrically: $f_t \le (1-\delta)f_{t-1}$. Let $f_0 \le N_{\text{vars}}$ be the initial number of fractional variables. The total violation is a sum over a converging geometric series:
    \begin{equation}
        \label{eq:error_sum}
        \begin{aligned}
            \sum_{t=1}^{T} \Delta_t &\le \sum_{t=1}^{T} C_3 \cdot \gamma \cdot \sqrt{f_{t-1}} \\
            &\le C_3 \cdot \gamma \cdot \sqrt{f_0} \sum_{k=0}^{T-1} (\sqrt{1-\delta})^{k} \\
            &\le C_3 \cdot \gamma \cdot \sqrt{N_{\text{vars}}} \cdot \frac{1}{1 - \sqrt{1-\delta}}
        \end{aligned}
    \end{equation}
    Let $C_4 = C_3 \cdot (1 - \sqrt{1-\delta})^{-1}$. The total error is bounded by $C_4 \cdot \gamma \cdot \sqrt{N_{\text{vars}}}$. To ensure this error is bounded by the allocated slack, we can select the step size $\gamma$ such that:
    $$ \gamma \le \frac{(\epsilon/2)\beta^\star}{C_4 \sqrt{N_{\text{vars}}}} $$
    This choice guarantees that the total added violation is no more than $(\epsilon/2)\beta^\star$. This trade-off between error magnitude and runtime (a smaller $\gamma$ may require a larger polynomial $T_{\text{micro}}$) is a standard component of such rounding frameworks~\cite{lrs11}.

    Consequently, the final latency is bounded by $T(P) \le (1+\epsilon/2)\beta^\star + (\epsilon/2)\beta^\star = (1+\epsilon)\beta^\star$.
\end{proof}

\begin{theorem}
\label{thm:time_complexity}
\autoref{alg:main} runs in polynomial time.
\end{theorem}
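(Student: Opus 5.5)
The plan is to split the running time of \autoref{alg:main} into its two phases, the binary search over the budget $\beta$ and the iterative rounding, and show that each phase does polynomially many polynomial-time operations.

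\textbf{Binary-search phase.}
\begin{itemize}
\item \emph{Size of each LP.} I will first bound the size of the feasibility LP in \eqref{problem:lp_feasibility}. It has $N_{\text{vars}} = L\cdot E\cdot|\mathcal{G}|$ placement variables and $2L$ auxiliary variables $\tau^l_{\text{comp}},\tau^l_{\text{comm}}$. The piecewise linear approximations of $f_{\text{comp}}$ and $f_{\text{comm}}$ use a fixed (profiled) number $K$ of breakpoints. After linearization, each max or piecewise term contributes $O(K)$ linear constraints. This gives $O(K\cdot L\cdot|\mathcal{G}|^2)$ constraints from \eqref{eq:lp_comp} and \eqref{eq:lp_comm}, and $O(LE+L|\mathcal{G}|+N_{\text{vars}})$ constraints from \eqref{eq:lp_assign}--\eqref{eq:lp_vars}.
\item \emph{Cost of each solve.} Every coefficient comes from the integer workload matrices $W^l$ and the profiled function tables, so the bit length is polynomial in the input size. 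By the ellipsoid or interior-point method, each LP solve therefore takes polynomial time.
\item \emph{Number of iterations.} I initialize $\beta_{\text{low}}=0$ and $\beta_{\text{high}}$ to the latency of any feasible placement, e.g. a round-robin placement respecting $C_{\text{max}}$. This bound is polynomial in the input magnitudes. The loop then terminates after $O(\log(\beta_{\text{high}}/\epsilon))$ iterations, which is polynomial in the bit size of the input and in $\log(1/\epsilon)$.
\end{itemize}

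\textbf{Rounding phase.}
\begin{itemize}
\item \emph{Number of macro-iterations.} Lemma~\ref{lem:progress} gives $|F'|\le(1-\delta)|F|$ with high probability. So after $O(\log_{1/(1-\delta)} N_{\text{vars}}) = O(\log N_{\text{vars}})$ macro-iterations no fractional variables remain. I will take a union bound over these iterations so the high-probability statement survives.
\item \emph{Cost of one macro-iteration.} By Theorem~\ref{thm:single_step} each macro-iteration is polynomial. Concretely, it has the following steps.
  \begin{itemize}
  \item Computing the scales $s_v$ and the groups $U_k$ takes $O(N_{\text{vars}})$ time. The number of distinct scales is polynomial, because of the bit length of the LP solution.
  \item There are $T_{\text{micro}}=\mathrm{poly}(N_{\text{vars}})$ micro-steps. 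Each micro-step identifies the tight constraints of $Q$, computes a null-space basis by Gaussian elimination (polynomial in the number of constraints and $N_{\text{vars}}$), samples a Gaussian direction, rescales it by $s_v$, and performs the update and clamping.
  \end{itemize}
\end{itemize}

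\textbf{Main obstacle.} The hard part is that $T_{\text{micro}}$ is coupled to $\gamma$. The proof of Theorem~\ref{thm:approximation_guarantee} requires $\gamma\le (\epsilon/2)\beta^\star/(C_4\sqrt{N_{\text{vars}}})$, and a smaller step needs more micro-steps for the potential $\Phi$ to grow enough. I would argue this as follows.
\begin{itemize}
\item From the proof of Lemma~\ref{lem:progress}, the expected potential gain per micro-step is $\Omega(\gamma^2\dim(\mathcal V))$.
\item $\Phi$ is bounded by $O(N_{\text{vars}})$.
\item So $T_{\text{micro}}=O(N_{\text{vars}}/\gamma^2)=O(C_4^2 N_{\text{vars}}^2/(\epsilon^2\beta^{\star 2}))$ suffices.
\end{itemize}
This is polynomial in $N_{\text{vars}}$ and $1/\epsilon$, provided $\beta^\star$ is bounded below by a positive quantity of polynomial bit length. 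That holds because $\beta^\star$ is the optimum of an LP with rational data, and it is positive whenever some token is routed. Multiplying the phase costs gives an overall bound polynomial in $L$, $E$, $|\mathcal{G}|$, the input bit length, and $1/\epsilon$.
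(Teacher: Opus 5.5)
Your first two blocks are essentially the paper's proof. The paper simply observes that the binary search makes logarithmically many calls to a polynomial-time LP solver, and that the rounding runs $O(\log N_{\text{vars}})$ macro-iterations of polynomial-time linear algebra. It takes $T_{\text{micro}}$, as in the algorithm description, to be a fixed polynomial in $N_{\text{vars}}$. Your accounting of LP size, bit length, the initial $\beta_{\text{high}}$ and the per-micro-step work fills this in, and it already suffices for the theorem as stated.

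The ``main obstacle'' paragraph, which you present as the crux, does not work.
\begin{itemize}
\item \textbf{$C_4$ is not a constant.} In the proof of Lemma~\ref{lem:violation}, the deviation threshold is $\gamma\sqrt{T_{\text{micro}}\,\text{Var}_s(\Lambda)}$, and $C_1$ explicitly absorbs $\sqrt{T_{\text{micro}}}$. Hence $C_4=C_1\sqrt{C_2}/(1-\sqrt{1-\delta})$ grows like $\sqrt{T_{\text{micro}}}$.
\item \textbf{Your bound becomes circular.} Substituting this into $T_{\text{micro}}\gtrsim C_4^2N_{\text{vars}}^2/(\epsilon^2(\beta^\star)^2)$ puts $T_{\text{micro}}$ on both sides, and it cancels. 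Combining $\gamma\le(\epsilon/2)\beta^\star/(C_4\sqrt{N_{\text{vars}}})$ with $T_{\text{micro}}\gtrsim N_{\text{vars}}/\gamma^2$ leaves only the requirement $\epsilon\beta^\star\gtrsim N_{\text{vars}}\sqrt{C_2}$. That is a condition on the instance, which no choice of $\gamma$ or $T_{\text{micro}}$ can enforce.
\item \textbf{Why rescaling cannot help.} The potential growth in Lemma~\ref{lem:progress} needs $\gamma^2T_{\text{micro}}$ large, and the deviation in Lemma~\ref{lem:violation} needs $\gamma\sqrt{T_{\text{micro}}}$ small. Both depend only on the product $\gamma\sqrt{T_{\text{micro}}}$, so shrinking $\gamma$ while taking $T_{\text{micro}}\propto 1/\gamma^2$ gains nothing.
\item \textbf{The $\beta^\star$ step is also insufficient.} Knowing that $\beta^\star$ is a positive rational of polynomial bit length only gives $1/\beta^\star\le 2^{\mathrm{poly}}$. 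That is not a polynomial bound on $1/(\beta^\star)^2$.
\end{itemize}

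Drop this paragraph. The runtime claim holds for any fixed polynomial $T_{\text{micro}}$ and does not need it. The tension you identified is a problem for the parameter choice in Theorem~\ref{thm:approximation_guarantee}, not something the runtime proof can resolve.
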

\begin{proof}
The algorithm consists of two phases. The binary search phase requires only a logarithmic number of calls to a standard polynomial-time LP solver. The subsequent iterative rounding phase is also fast, consisting of a logarithmic number of macro-iterations, each involving only efficient linear algebra operations.
Since both sequential phases run in polynomial time, the time complexity of \autoref{alg:main} is polynomial.
\end{proof}

\section{Evaluation}
\label{sec:evaluation}

\begin{table}[t]
\centering

\begin{tabular}{lcccc}
\toprule
\multirow{2}{*}{MoE Model} &
\multicolumn{2}{c}{Cascade} &
\multicolumn{2}{c}{Quantization} \\
\cmidrule(lr){2-3} \cmidrule(lr){4-5}
& Acc.(\%) & Size (\%) & Acc.(\%) & Size (\%) \\
\midrule
Mixtral‑8$\times$7B   & 91.44 & 0.09 & 94.10 & 27.36 \\
DeepSeekMoE‑16B       & 84.81 & 0.07 & 90.98 & 29.51 \\
DeepSeek‑V2‑Lite      & 84.62 & 0.08 & 95.68 & 35.62 \\
Qwen3‑30B‑A3B         & 76.99 & 0.04 & 94.36 & 28.11 \\
\bottomrule
\end{tabular}
\caption{Expert routing prediction accuracy and predictor size relative to the served MoE for the two options}
\vspace{-0.4cm}
\label{tab:predictor_performance}
\end{table}

\subsection{Implementation}
We implement our \textsc{Director} prototype in Python on top of Colossal-AI~\cite{Colossal}. The relaxation-based placement optimizer solves LP using SciPy~\cite{2020SciPy-NMeth}, and we use Activation-aware Weight Quantization (AWQ)~\cite{awq} for 4-bit quantization. 
Besides, the PyTorch version is 2.3, and the CUDA version is 12.4.
The communication backend is based on NCCL.

\subsection{Experimental Setup}

\textbf{Testbed.}
We evaluate \textsc{Director} on two representative clusters with  heterogeneous hardware configurations:  
(1) Cluster A: 4 nodes, each with 8 NVIDIA RTX 4090 GPUs; GPUs are connected by PCIe 4.0 within a node and by InfiniBand across nodes.
(2) Cluster B: 4 nodes, each with 8 NVIDIA H200 GPUs; GPUs are connected by NVLINK within a node and by InfiniBand across nodes.

\textbf{Models and Datasets.}
We evaluate \textsc{Director} on four representative MoE models: Mistral 8$\times$7B~\cite{jiang2024mixtral}, DeepSeekMoE‑16B~\cite{deepseek_moe}, DeepSeek‑V2‑Lite~\cite{deepseek_v2}, and Qwen3‑30B‑A3B~\cite{qwen3technicalreport}. 
Selected models cover a wide spectrum of configurations: the number of experts per layer varies from 8 to 128, and the top-k routing from 2 to 8 (details in \autoref{tab:finding3}). 
To emulate a realistic multi-task scenario, we sample prompts from three widely used datasets: MATH500 \cite{math_dataset}, WikiText-103 \cite{wikitext}, and Live Code Bench \cite{jain2024livecodebench}.

\textbf{Baselines.}
We compare \textsc{Director} with following baselines:
(1)Vanilla: the default expert‑placement strategy in DeepSpeed‑MoE~\cite{rajbhandari2022deepspeed}.
(2) Offline (static) placement: an optimal placement is computed once from historical traces and kept fixed during serving, as in ~\cite{go2025moetuneroptimizedmixtureexpert}.
(3) Online reactive placement: the system observes expert‑routing patterns of the current and recent batches and adjusts placement accordingly at runtime, as in ~\cite{zhai2023smartmoe,he2022fastermoe, flexmoe}.

\textbf{Metrics.} 
We evaluate \textsc{Director} and all baselines using three metrics, reporting the mean over five random seeds for each experiment.  
(1) \emph{End‑to‑end latency}: the wall‑clock time from a request’s arrival in the queue to its completion, capturing both computation and any migration overhead.  
(2) \emph{Throughput}: the number of tokens served per second under steady load, reflecting aggregate system efficiency.
(3) \emph{Prediction accuracy}: the top‑$k$ match rate between the predictor’s forecasted experts and the ground‑truth routing.

\begin{figure}[t]
        \centering
	\subfloat[][RTX 4090 Cluster]{
		\begin{minipage}[t]{0.9\linewidth}
			\centering
			\includegraphics[width=\linewidth]{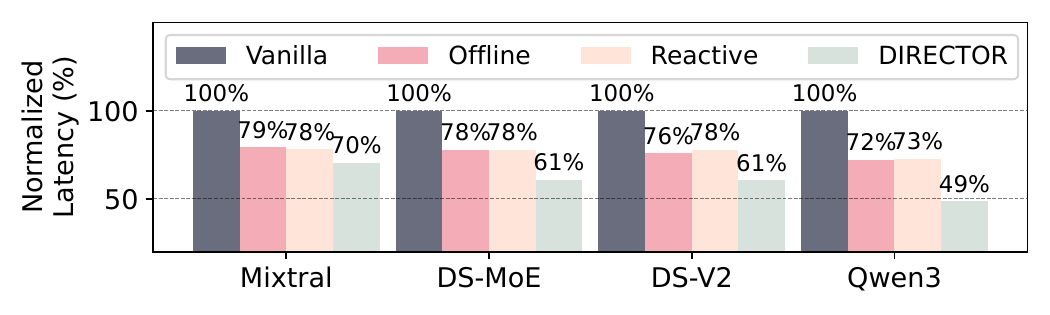} 
		\end{minipage}%
		\label{fig:latency_32_4090}
	}\vspace{-0.2cm}
    \\
	\subfloat[][H200 Cluster]{
		\begin{minipage}[t]{0.9\linewidth}
			\centering
			\includegraphics[width=\linewidth]{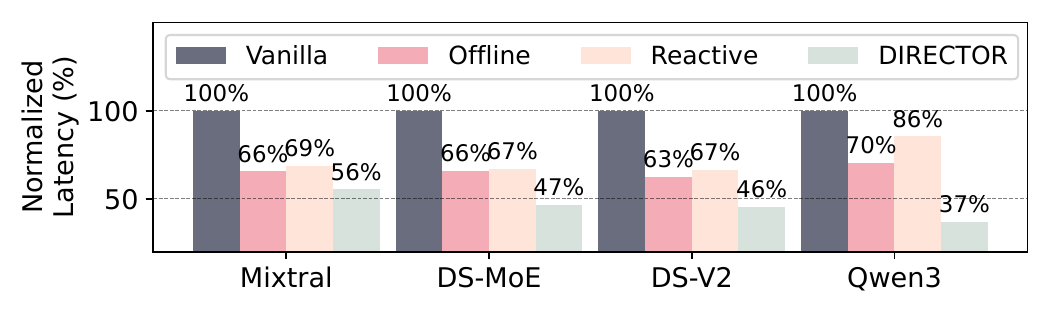} 
		\end{minipage}%
		\label{fig:latency_32_h200}
	}\vspace{-0.2cm}
	\caption{End-to-end normalized latency of \textsc{Director} against baselines on different models and clusters.}
    \label{fig:latency}
    \vspace{-0.4cm}
\end{figure}


\subsection{Results}  


Table~\ref{tab:predictor_performance} summarizes the accuracy and footprint of our two predictor modes.
The cascade predictor (two-stage) achieves $77\sim91\%$ accuracy with a minimal footprint ($<0.1\%$).
Accuracy peaks on Mixtral (8 experts) but declines on complex routing tasks, where a deeper cascade (\autoref{predictor}) can maintain performance.
Alternatively, the quantized predictor (4-bit BF16 replica) reaches $91\sim96\%$ accuracy, nearly matching full-precision routing.
These results reveal a trade-off: the cascade predictor offers high efficiency, while the quantized predictor ensures maximum fidelity.

\autoref{fig:latency} compares the end-to-end latency of \textsc{Director} against baselines.
Our approach consistently achieves the lowest latency, yielding $11\%\sim55\%$ gains over the offline baseline.
This advantage grows with routing complexity; as experts increase (from 8 in Mixtral to 128 in Qwen3), load imbalance risks—resolved by our proactive placement—worsen.
These gains are amplified on the H200 cluster, where high bandwidth reduces communication penalties, allowing \textsc{Director} to focus on balancing computational load.
We also note that baselines are limited by their reactive nature, though the offline approach generally outperforms the myopic reactive strategy due to its longer profiling window.

To analyze robustness, \autoref{fig:latency_batch} shows the per-batch latency trace.
All methods show similar fluctuations, suggesting intrinsic demand variations.
However, the reactive strategy exhibits sharper spikes than offline, as its myopic focus makes it vulnerable to transient extremes.
Conversely, \textsc{Director} demonstrates superior stability, achieving the lowest latency and variance, which indicates robust performance under dynamic loads.

\begin{figure}[t]
	\centering
	\includegraphics[width=0.83\linewidth]{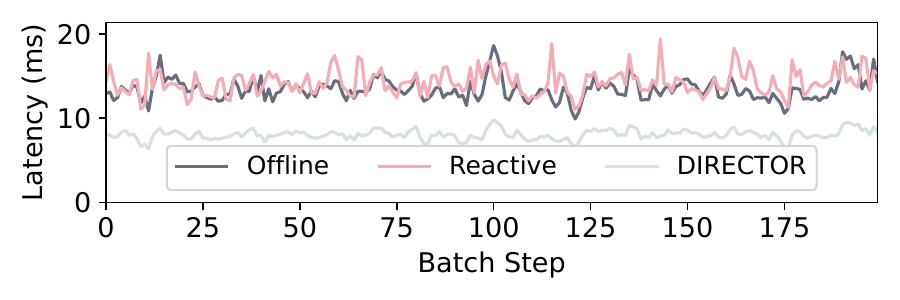}
    \vspace{-0.2cm}
	\caption{The batch‑level latency traces under different expert placement approaches on H200 cluster. }\vspace{-0.2cm}
	\label{fig:latency_batch}
    \vspace{-0.4cm}
\end{figure}

\begin{figure}[t]
	\centering
	\includegraphics[width=0.9\linewidth]{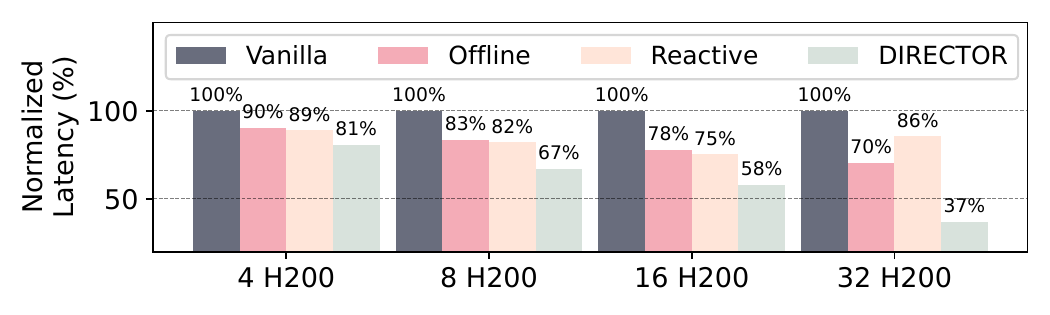}
        \vspace{-0.2cm}
	\caption{End-to-end normalized latency of \textsc{Director} against baselines on different number of GPUs. }
	\label{fig:gpu_scaling}
        \vspace{-0.4cm}
\end{figure}

\begin{figure}[t]
	\centering
	\includegraphics[width=0.9\linewidth]{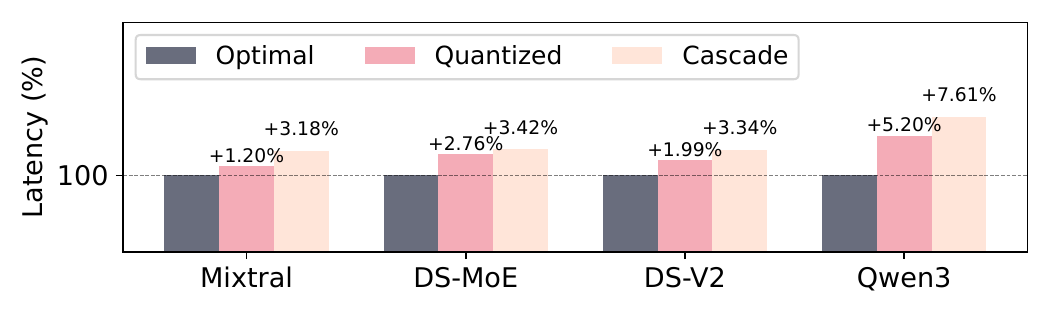}
        \vspace{-0.2cm}
	\caption{Performance under different predictor options.}
	\label{fig:ablation}
        \vspace{-0.4cm}
\end{figure}

\begin{figure}[t]
	\centering
	\includegraphics[width=0.9\linewidth]{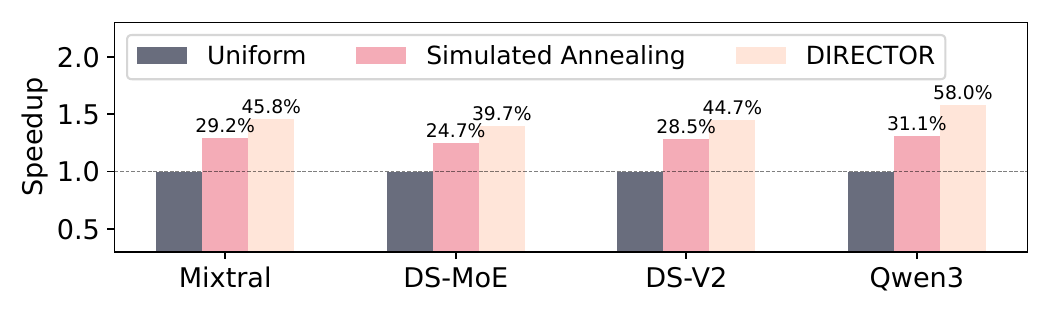}
    \vspace{-0.2cm}
	\caption{Performance under various optimization algorithms.}
	\label{fig:algorithm_compare}
    \vspace{-0.4cm}
\end{figure}


We further evaluate scalability by measuring latency as H200 GPUs increase from 4 to 32.
As shown in \autoref{fig:gpu_scaling}, \textsc{Director}'s advantage becomes more pronounced at scale.
Latency reduction over the offline baseline grows from 11\% (4 GPUs) to 48\% (32 GPUs).
This highlights a key insight: larger systems are more sensitive to load imbalance.
A single mismanaged expert can create a straggler that bottlenecks more devices, amplifying suboptimal placement costs.
\textsc{Director}'s proactive approach effectively mitigates this, whereas baseline efficacy diminishes as the system scales.

\subsection{Ablation Study}

To quantify the performance impact of predictor inaccuracy and validate the effectiveness of our relaxation-based placement algorithm, we conduct the following two ablation studies.

First, we evaluate how predictor fidelity affects overall performance. 
We compare our two predictor options from \autoref{predictor} against an oracle with ground-truth routing on a 32-GPU RTX 4090 cluster, with results summarized in \autoref{fig:ablation}. 
Across the four tested models, the performance gap between our predictors and the oracle is modest, ranging from 1\% to 8\%. 
The cascade predictor results in a performance gap of 3\% to 7\%, while the gap from the quantized predictor is almost negligible at 1\% to 6\%. 
This small performance gap demonstrates that our predictors achieve sufficient accuracy and do not introduce a significant performance gap.

Second, we evaluate our relaxation-based placement algorithm against a no-prediction baseline and a Simulated Annealing (SA) heuristic. 
SA is a common heuristic for large optimization spaces, and we ensure it runs for sufficient iterations to guarantee a fair comparison. 
As shown in \autoref{fig:algorithm_compare}, our algorithm achieves a substantial 39\% to 58\% performance improvement over the baseline. 
In contrast, the SA heuristic, despite an extensive search, yields a more modest gain of 24\% to 32\%. 
This result demonstrates that our relaxation-based approach consistently finds higher-quality solutions than a well-established heuristic, highlighting its effectiveness in navigating the complex placement landscape.

\section{Conclusion}
This paper proposes \textsc{Director}, a distributed MoE serving system designed to overcome the performance limitations of existing offline and reactive placement strategies under dynamic workloads. 
\textsc{Director} materializes this proactive paradigm through its key technical contributions: it employs adaptive predictors to predict routing from pending requests; utilizes a relaxation-based optimizer to find a near-optimal placement in polynomial time with a provable approximation guarantee; and enacts reconfigurations via a computation-overlapped migration module to ensure near-zero downtime. 
Through extensive evaluations, including targeted ablation studies that validate each design component, our prototype demonstrates that this proactive approach reduces end-to-end latency by a significant $11-55\%$ compared to existing systems. 
In future work, we plan to generalize our proactive placement paradigm beyond expert parallelism, applying its principles to other parallel dimensions.

\bibliographystyle{IEEEtran}
\bibliography{ref}

\end{document}